\documentclass[accepted]{uai2026} 
                        
\usepackage[american]{babel}

\usepackage{natbib} 
\usepackage{mathtools} 
\usepackage{booktabs} 
\usepackage{tikz} 

\usepackage[dvipsnames]{xcolor}
\hypersetup{
    colorlinks=true,
    citecolor=RoyalBlue
}

\usepackage[capitalize,noabbrev]{cleveref}

\usepackage{enumitem}
\usepackage{multirow}
\usepackage{makecell}
\usepackage{algorithm}
\usepackage{algorithmic}
\usepackage{pifont}
\usepackage{microtype}
\usepackage{graphicx}
\usepackage{booktabs} 
\usepackage{subcaption}

\usepackage{amsmath}
\usepackage{amssymb}
\usepackage{mathtools}
\usepackage{amsthm}
\usepackage{bm}

\theoremstyle{plain}
\newtheorem{theorem}{Theorem}[section]
\newtheorem{proposition}[theorem]{Proposition}
\newtheorem{lemma}[theorem]{Lemma}

\theoremstyle{definition}
\newtheorem{definition}[theorem]{Definition}

\theoremstyle{remark}

\title{Remedying Coarsening-Based GNN Training under Heterophily via Adaptive Complementary Enhancement}

\newcommand{\corrauthor}{\textsuperscript{\dag}}

\author[1]{\href{mailto:paskardli@outlook.com}{Guoming Li}{}}
\author[2]{Jian Yang}
\author[3]{Xukun Wang}
\author[4]{Zixiao Wang}
\author[5]{Shangsong Liang}
\author[1]{\href{mailto:yifanc@hkbu.edu.hk}{Yifan Chen\corrauthor}{}}

\affil[1]{Department of Computer Science, Hong Kong Baptist University, Hong Kong SAR, China}
\affil[2]{School of Information, Renmin University of China, Beijing, China}
\affil[3]{Academy of Mathematics and Systems Science, Chinese Academy of Sciences, Beijing, China}
\affil[4]{Mohamed bin Zayed University of Artificial Intelligence, Abu Dhabi, UAE}
\affil[5]{School of Computer Science, Sun Yat-sen University, Guangzhou, China}

\begin{document}

\hypersetup{urlcolor=black}
\maketitle
\hypersetup{urlcolor=magenta}

\begingroup
\renewcommand{\thefootnote}{\fnsymbol{footnote}}
\footnotetext[2]{Corresponding author}
\endgroup

\begin{abstract}
Coarsening-based training for graph neural networks (GNNs), i.e.\ training on coarsened graphs rather than the original large ones, has become a promising direction for scaling GNNs to massive graphs. 
However, prior work has been evaluated almost exclusively on \textit{homophilic} graphs, leaving the more challenging \textit{heterophilic} settings underexplored. 
We show, both empirically and theoretically, that existing coarsening-based training methods suffer significant performance degradation on heterophilic graphs due to inevitable loss of graph information during coarsening.
To address this, we propose {\bf A}daptive {\bf C}omplementary {\bf E}nhancement, a plug-and-play, model-agnostic strategy that reintegrates the information discarded in coarsening: 
ACE learns a projector for re-constructing
original node features and applies \textit{anisotropic structural regularization} to embed local heterophily.
We further adopt \textit{homoscedastic uncertainty weighting} to adaptively balance the combined training objective of primary coarsened-graph training loss and full-graph auxiliary loss with augmented node features re-constructed by the heterophily-aware projector. 
Extensive experiments show that ACE drives consistent gains on heterophilic benchmarks while preserving competitive results on homophilic graphs with minimal computational overhead. 
Code is available at the GitHub repository:~\url{https://github.com/vasile-paskardlgm/ACE}.
\end{abstract}

\section{Introduction}
\label{sec:intro}

Graph neural networks (GNNs) have emerged as powerful tools for capturing structural information from graph signals, achieving prominent performance across a broad spectrum of structured and graph-based applications~\citep{GNNFoundationsApplications}. 
However, as real-world graphs scale to millions of nodes and billions of edges, the computational complexity of mainstream propagation algorithms results in formidable training costs~\citep{gnn-survey}.
In response, various scalable training strategies have been proposed, including \textit{model-centric} graph sampling~\citep{sampling-GNN-training} and \textit{data-centric} graph reduction~\citep{graph-reduction-survey}.

Among data-centric methods, \textbf{graph coarsening} has emerged as a widely adopted paradigm~\citep{GC-scal,Gcoarse-7-FGC,UGC,SGBGC,GC-GCNonly-convmatch}. By training GNNs on a reduced graph to produce predictions for the original scale, these methods achieve significant efficiency. Unlike \textit{graph condensation}~\citep{Gcond}, coarsening is \textit{model-agnostic}, allowing a single coarsened graph to support a broad family of GNN architectures~\citep{GC-scal,UGC}.



Despite this empirical success, existing coarsening methods have been developed almost exclusively for \textbf{homophilic} graphs, leaving more challenging \textbf{heterophilic} graphs (distinct from \textit{heterogeneous} graphs; \citealt{heterogeneous-graph-survey}) largely unexplored~\citep{heterophily-gnn-survey-2}. 
In homophilic graphs, neighboring nodes tend to share similar labels or features, whereas this assumption breaks down in heterophilic graphs; 
therein neighbors often possess different labels or representations. 

Our preliminary analysis (in Section~\ref{sec:preliminaries-challenge}) reveals that coarsening-based training suffers a much greater performance drop on heterophilic graphs relative to full-graph training, while the degradation on homophilic graphs is notably smaller. 
We turn to explain this disparity through the lens of mutual information, showing that graph heterophily significantly exacerbates the \textit{mutual information gap} between models trained via coarsening and those trained on the full graph. These results suggest that heterophily poses a fundamental challenge to coarsening-based training.

In response, this work proposes {\bf A}daptive {\bf C}omplementary {\bf E}nhancement (ACE), a plug-and-play, model-agnostic framework for enhancing coarsening-based GNN training under heterophily. The central principle of ACE is to reintegrate the graph information discarded during coarsening via a heterophily-aware auxiliary loss.
In particular, ACE begins by learning a refined \textbf{projector} for re-constructing original node features with averaged supernode feature;
this learning process is further features \textit{anisotropic structural regularization} (c.f.\ Eq.~\eqref{eq:ASR}) for identifying heterophily. 
This regularization stems from anisotropic graph diffusion and were primarily adapted to supervised GNN training settings~\citep{graph-anisotropic-diffusion-1,graph-anisotropic-diffusion-4}; 
here, we formulate it as an unsupervised data augmentation mechanism to embed local heterophily into the reconstructed features as well as the projector.


The resulting projector lifts GNN predictions from the coarsened graph back to the full-graph scale to define an auxiliary loss with respect to the full-graph labels. 
To further stabilize training, we adopt \textit{homoscedastic uncertainty}~\citep{homoscedastic-uncertainty} to adaptively weigh the auxiliary objective against the primary coarsening loss. 
ACE's modularity enables seamless integration into existing pipelines, where our empirical studies verify consistent performance gains. 
To summarize, our main contributions are as follows:
\begin{itemize}
[leftmargin=15pt,parsep=2pt,itemsep=2pt,topsep=2pt]
    \item We recognize the heterophily challenge in coarsening-based GNN training, and 
    provide empirical evidence and theoretical explanations highlighting this underexplored limitation.
    \item We propose {\bf A}daptive {\bf C}omplementary {\bf E}nhancement, a plug-and-play, model-agnostic appraoch that augments coarsening-based training with an heterophily-aware auxiliary loss.
    \item We conduct extensive experiments on large-scale heterophilic and homophilic benchmarks, demonstrating that ACE consistently improves coarsening training by a notable margin.
\end{itemize}

\section{Preliminaries and Challenges}\label{sec:preliminaries}
We first introduce key notations and background knowledge, then identify the heterophily challenge in existing coarsening training via empirical and theoretical analysis.

\subsection{Notations and Preliminaries}\label{sec:preliminaries-background}

\paragraph{Notations.} 
Let $\mathcal{G} = (A, X)$ represent a graph, where $A \in \{0,1\}^{n \times n}$ denotes the adjacency matrix and $X \in \mathbb{R}^{n\times d}$ is the node features/signals. 
The normalized graph Laplacian is defined as $L = I-D^{-\frac{1}{2}} A D^{-\frac{1}{2}}$~\citep{spectralgraphtheory}, where $I$ is the identity matrix and $D$ is the degree matrix. 
Additionally, we denote $Y \in \{0,1\}^{n \times c}$ as the one-hot encoded label matrix, where $c$ is the number of node classes. 
On graph coarsening, we follow prior literature~\citep{Gcoarse_survey,Gcoarse_survey-2},
and let $\mathcal{C}_{1}, \mathcal{C}_{2}, ..., \mathcal{C}_{n^{\prime}}$ be a partition of the node set into $n^{\prime} < n$ disjoint clusters, where each cluster corresponds to a \textit{supernode} in the coarsened graph.
The partition matrix $P \in \{0,1\}^{n^{\prime} \times n}$ satisfies $P_{i,j} = 1$ if node $v_{j} \in \mathcal{C}_{i}$. 
The coarsened graph $\mathcal{G}^{\prime} = (A^{\prime}, X^{\prime})$ is given by: $A^{\prime} = P A P^{T}$, $X^{\prime} = C^{-1} P X$, and $Y^{\prime} = C^{-1} P Y$, where $C = \text{diag}(\vert \mathcal{C}_{1} \vert, \vert \mathcal{C}_{2} \vert, ..., \vert \mathcal{C}_{n^{\prime}} \vert)$.

\paragraph{Coarsening-based GNN Training.} 
Following prior works (e.g.,~\citep{GC-scal,SGBGC}), coarsening training seeks to improve efficiency by training on a coarsened graph $\mathcal{G}^{\prime}$, while aiming to retain comparable downstream performance to models trained on the full graph $\mathcal{G}$. 
Formally, this paradigm can be defined as follows:
\begin{definition}[Coarsening-based GNN Training]
\label{definition:problem}
Let $f(; \Theta)$ denote a GNN with parameter set $\Theta$ and a softmax output layer. 
Coarsening-based GNN training minimizes the task loss on the coarsened graph: $\mathcal{L}(f(A^{\prime}, X^{\prime}; \Theta), Y^{\prime})$, which serves as a surrogate objective for the original graph loss: $\mathcal{L}(f(A, X; \Theta), Y)$.

In this work, we focus on the \textit{node classification} task, where $\mathcal{L}$ instantiated as the \textit{cross-entropy} loss, following prior studies~\citep{GC-scal,Gcoarse-7-FGC,UGC,SGBGC}.
\end{definition}

\paragraph{Graph with Heterophily.}
Heterophilic graphs—distinct from \textit{heterogeneous} graphs~\citep{heterogeneous-graph-survey}—have become a central focus in modern node-classification research~\citep{heterophily-gnn-survey,heterophily-gnn-survey-2}.
Unlike homophilic graphs, where neighboring nodes typically share similar labels or attributes, heterophilic graphs connect dissimilar nodes, violating key assumptions behind message-passing GNNs and often exacerbating \textit{over-smoothing} effects~\citep{oversmoothing,hetero-smooth}. 
Such structures commonly arise in real-world networks (e.g., financial, communication, and certain molecular graphs) and pose distinct challenges for GNN architectures~\citep{dataset6-large-hetero,dataset8-small-hetero}.
Despite their importance, the impact of heterophily has remained largely unexamined in coarsening-based GNN training, leaving a notable gap that we explore in this work.

\paragraph{Dirichlet Energy on Graphs.}
Let $E$ be the edge set of graph $\mathcal{G}$. 
The Dirichlet energy of node features $X$ is typically defined as
\begin{equation}
\label{eq:dirichlet-energy}
\mathcal{E}(\mathcal{G}) = \frac{1}{2}\sum_{(i,j)\in E}A_{ij}\|\frac{X_{i}}{\sqrt{D_{ii}}}-\frac{X_{j}}{\sqrt{D_{jj}}}\|_2^2 = \text{trace}(X^{\top}LX)\ .
\end{equation}
Dirichlet energy serves as a fundamental descriptor of smoothness over the graph and is central to understanding diffusion, propagation, and convolution behavior on graphs~\citep{graph-anisotropic-diffusion-6,graph-diffusion-survey}. 
This concept closely tied to the aforementioned key phenomena: heterophily, where label signals exhibit high Dirichlet energy~\citep{fLode}, and over-smoothing, where repeated graph propagation drives the energy of node representations toward zero~\citep{oversmoothing-energy}.

\subsection{Heterophily Challenge}\label{sec:preliminaries-challenge}

\paragraph{Empirical Observations.} We examine how existing coarsening-based GNN training methods behave under heterophily. 
Specifically, we evaluate three representative approaches—SCAL~\citep{GC-scal}, FGC~\citep{Gcoarse-7-FGC}, and SGBGC~\citep{SGBGC}—using a 10\% coarsening ratio across large heterophilic graphs (Genius, Gamers, Wiki~\citep{dataset6-large-hetero}) and homophilic graphs (Ogbn-arxiv, Ogbn-products~\citep{dataset5-ogb}). 
The experiments are conducted with three backbone GNNs: GCN~\citep{GCN}, LINKX~\citep{dataset6-large-hetero}, and GloGNN~\citep{glognn++}, where the latter two are explicitly designed for heterophilic settings. 
Full details follow Section~\ref{sec:experiments-setup}.

\begin{table*}[!th]
\caption{Impact of heterophily on coarsening-based training. ``-'' denotes full-graph training, and $\Delta\downarrow$ indicates the average performance drop of coarsening training relative to full-graph training for each dataset.}
\label{table-numerical-heterophily-performance}
\centering
\setlength{\tabcolsep}{5pt}
\resizebox{\textwidth}{!}{
\begin{tabular}{ccccc|cccc|cccc|c}
\toprule
Backbone & \multicolumn{4}{c|}{GCN}      & \multicolumn{4}{c|}{LINKX}    & \multicolumn{4}{c|}{GloGNN}   & \multirow{2}{*}{$\Delta\downarrow$} \\ \cmidrule(lr){2-13}
Method   & -     & SCAL  & FGC   & SGBGC & -     & SCAL  & FGC   & SGBGC & -     & SCAL  & FGC   & SGBGC &                                     \\ \midrule
Genius   & 87.42 & 67.47 & 70.13 & 71.22 & 90.77 & 60.62 & 65.49 & 62.68 & 90.66 & 59.47 & 63.81 & 67.33 & \textbf{26.26}                               \\ \cmidrule(lr){2-14} 
Gamers   & 62.18 & 47.39 & 40.14 & 44.62 & 66.06 & 50.13 & 42.48 & 47.60 & 66.19 & 46.28 & 45.92 & 47.15 & \textbf{19.07}                               \\ \cmidrule(lr){2-14} 
Pokec    & 75.45 & 50.44 & 53.81 & 55.48 & 82.04 & 56.48 & 57.63 & 54.18 & 83.00 & 61.58 & 66.91 & 63.77 & \textbf{22.35}                               \\ \midrule
Arxiv    & 71.74 & 63.42 & 65.52 & 64.12 & 69.54 & 57.71 & 59.38 & 61.55 & 72.68 & 57.09 & 61.73 & 61.24 & 7.79                                \\ \cmidrule(lr){2-14} 
Products & 75.64 & 68.37 & 71.28 & 71.96 & 74.59 & 62.49 & 67.52 & 64.93 & 77.48 & 66.53 & 70.46 & 69.11 & 6.50                                \\ \bottomrule
\end{tabular}}
\end{table*}

Results in Table~\ref{table-numerical-heterophily-performance} reveals a striking trend: \textbf{all} coarsening-based methods suffer substantially larger performance degradation on heterophilic graphs compared to homophilic ones—often nearly \textbf{four times} larger—and this occurs consistently across all backbone models, including heterophily-specialized architectures (LINKX and GloGNN).

Notably, these heterophily-oriented models exhibit even larger performance drops, underscoring that the difficulty stems not from the model itself but from a more \textbf{fundamental limitation of coarsening-based training under heterophilic settings}. 
More empirical results supporting this observation are presented in the main experiments in Section~\ref{sec:experiments} and Appendix~\ref{app:expdetails-moreexp}.

\paragraph{Theoretical Explanation.} The empirical results on heterophilic graphs reveal a pronounced and consistent performance gap between models trained on the coarsened graph and those trained on the full graph, indicating that the degradation arises from the coarsening process itself. 
Motivated by this observation, we study the heterophily challenge from an information-theoretic perspective to understand how and why this gap arises.

Let $\mathcal{G} \setminus \mathcal{G}^{\prime}$ denote the \textit{discarded graph information} during coarsening, consisting of the intra-supernode structural details (illustrated in Figure~\ref{fig:coarsening-residual}). Using this notion, we characterize the discrepancy between the full-graph–trained model and the coarsened-graph–trained model by quantifying the mutual information between $\mathcal{G} \setminus \mathcal{G}^{\prime}$ and the label $Y$, leading to the following proposition:

\begin{figure}[!t]
  \centering
  \includegraphics[width=\linewidth]{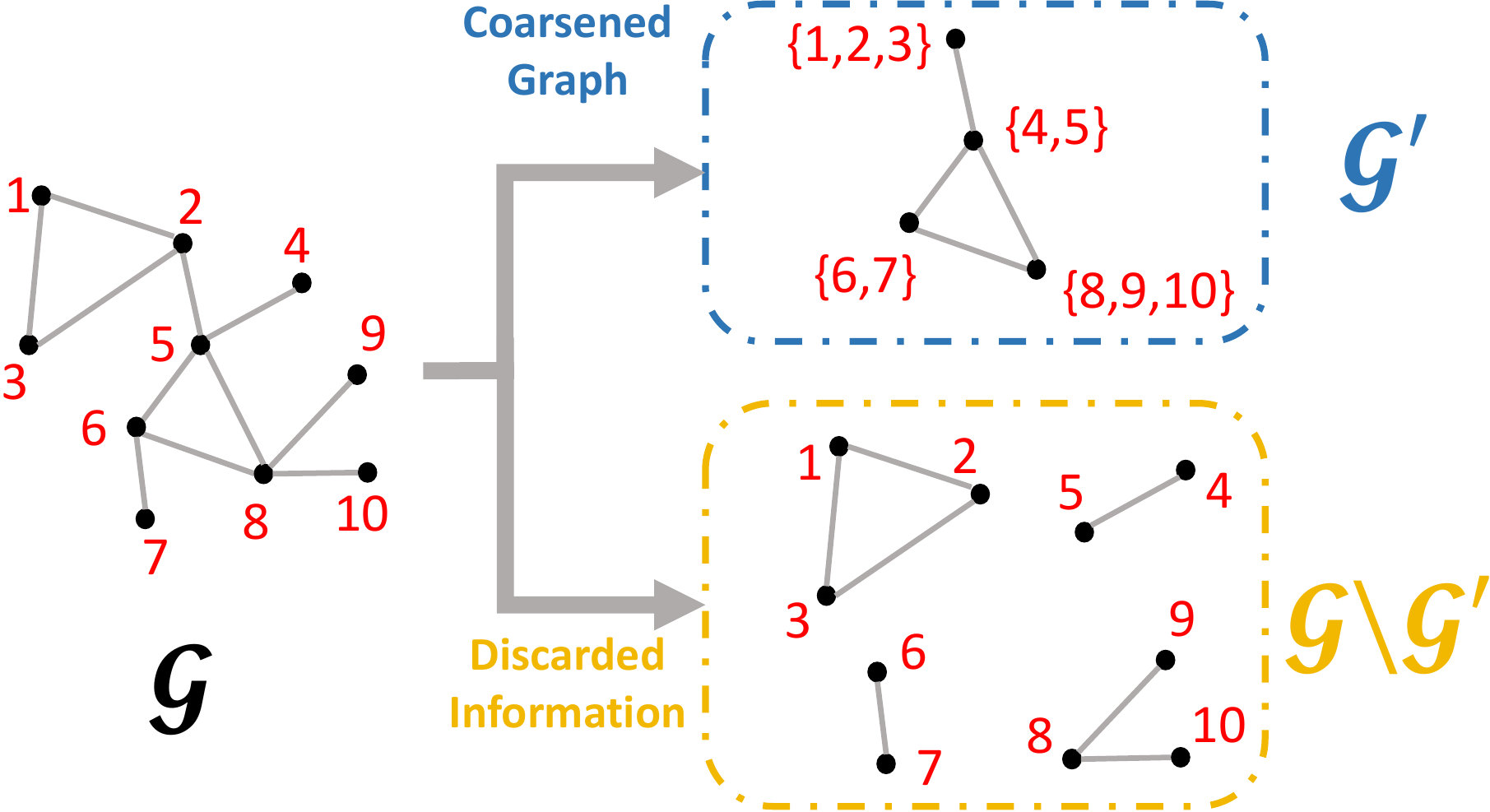}
  \caption{Illustration for graph coarsening: 
  $\mathcal{G}^{\prime}$ encodes inter-supernode relations; $\mathcal{G} \setminus \mathcal{G}^{\prime}$ holds intra-supernode information discarded during coarsening.}
  \label{fig:coarsening-residual}
\end{figure}
\begin{proposition}
\label{proposition:bound-error}
Let the GNN trained on the original graph be $f(; \Theta^{\ast \ast})$ and the one trained on the coarsened graph be $f(; \Theta^{\ast})$, where
\begin{align}
\label{eq:theta-gnn-params}
\Theta^{\ast} &\triangleq \arg\min\limits_{\Theta} \mathcal{L}(f(A, X; \Theta), Y)\ , \\
\Theta^{\ast \ast} &\triangleq \arg\min\limits_{\Theta} \mathcal{L}(f(A^{\prime}, X^{\prime}; \Theta), Y^{\prime})\ .
\end{align}
When both models are evaluated on the original graph, the mutual information between their outputs satisfies
\begin{align}
\label{eq:MI-upperbound}
I(f(A, X; \Theta^{\ast});f(A, X; \Theta^{\ast \ast})) \leq I(f(A, X; \Theta^{\ast}); Y) - \Omega\ ,
\end{align}
where $\Omega = I( g(\mathcal{G} \setminus \mathcal{G}^{\prime}) ; Y |  f(A, X; \Theta^{\ast \ast}) )$, and $g: \mathcal{G} \setminus \mathcal{G}^{\prime} \mapsto \mathcal{Y}$ is arbitrary neural network that maps $\mathcal{G} \setminus \mathcal{G}^{\prime}$ to the node label space $\mathcal{Y}$ and maximizes the conditional mutual information $I( g(\mathcal{G} \setminus \mathcal{G}^{\prime}) ; Y |  f(A, X; \Theta^{\ast \ast}) )$.
\end{proposition}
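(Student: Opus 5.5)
The argument is information-theoretic and uses only the chain rule for mutual information and the data processing inequality. Write $F^{\ast} = f(A,X;\Theta^{\ast})$, $F^{\ast\ast} = f(A,X;\Theta^{\ast\ast})$ and $G = g(\mathcal{G}\setminus\mathcal{G}^{\prime})$. Treat all three as random variables induced by the randomness of the node and graph data, with $\Theta^{\ast}$ and $\Theta^{\ast\ast}$ fixed after training. Two modelling assumptions will be stated explicitly and used:
\begin{itemize}
\item[(a)] The full-graph model is label-sufficient for the quantities considered, i.e.\ $F^{\ast}$ captures its dependence on the data through $Y$. This gives the Markov chain $F^{\ast\ast} \to Y \to F^{\ast}$, because $\Theta^{\ast}$ is fitted to $Y$ as the cross-entropy minimizer of $\mathcal{L}(f(A,X;\Theta),Y)$.
\item[(b)] The information in $(A,X)$ relevant to $Y$ decomposes into the inter-supernode part, which $F^{\ast\ast}$ can access since $\Theta^{\ast\ast}$ is trained on $\mathcal{G}^{\prime}$, and the intra-supernode part $\mathcal{G}\setminus\mathcal{G}^{\prime}$, which is summarized by $G$.
\end{itemize}

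\textbf{Step 1 (data processing).} From the chain in (a), the data processing inequality gives
\begin{equation*}
I(F^{\ast};F^{\ast\ast}) \le I(F^{\ast\ast};Y).
\end{equation*}
I would further compare $I(F^{\ast\ast};Y)$ with $I(F^{\ast\ast},G;Y)$, which in turn is at most $I(F^{\ast};Y)$. The last inequality requires that $F^{\ast}$, being the optimal full-graph predictor, retains at least as much label information as the pair $(F^{\ast\ast},G)$, since both are functions of $(A,X)$. I would justify this by the optimality of $\Theta^{\ast}$ under cross-entropy: its minimizer approximates the Bayes posterior $p(Y\mid A,X)$, which is a sufficient statistic for $Y$. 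Sufficiency gives $I(F^{\ast};Y)=I(A,X;Y)\ge I(F^{\ast\ast},G;Y)$ by data processing.

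\textbf{Step 2 (chain rule).} Expand
\begin{equation*}
I(F^{\ast\ast},G;Y) = I(F^{\ast\ast};Y) + I(G;Y\mid F^{\ast\ast}) = I(F^{\ast\ast};Y) + \Omega,
\end{equation*}
where $g$ is taken to be the maximizer of this conditional term, as in the statement. Combining with Step 1,
\begin{equation*}
I(F^{\ast};F^{\ast\ast}) \le I(F^{\ast\ast};Y) = I(F^{\ast\ast},G;Y) - \Omega \le I(F^{\ast};Y) - \Omega,
\end{equation*}
which is exactly \eqref{eq:MI-upperbound}.

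\textbf{Main obstacle.} The hard part is the first inequality, $I(F^{\ast};F^{\ast\ast})\le I(F^{\ast\ast};Y)$. It needs the Markov structure $F^{\ast\ast}\to Y\to F^{\ast}$, which does not hold literally because both outputs are deterministic functions of the same input $(A,X)$. My first approach is to argue it in the idealized regime where the full-graph model has fit the labels, so that $F^{\ast}$ is a function of $Y$ (the training fit). Then $I(F^{\ast};F^{\ast\ast})\le I(Y;F^{\ast\ast})$ follows by data processing.

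If that idealization is too strong, the fallback is to bound instead by $\min\{I(F^{\ast};Y),\, I(F^{\ast\ast};Y)\}$ under approximate sufficiency, and carry an explicit slack term. I would state whichever assumption is used explicitly in the proof. The sufficiency claim $I(F^{\ast};Y)\ge I(F^{\ast\ast},G;Y)$ rests on the same kind of idealization, namely that the cross-entropy minimizer is the Bayes posterior, and should be flagged alongside it.

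Finally, I would interpret the result: heterophily makes the intra-supernode structure more label-informative given the coarse prediction, so $\Omega$ grows and the upper bound tightens.
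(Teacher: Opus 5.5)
Your proposal follows the paper's proof essentially step for step. The paper also gets $I(F^{\ast};F^{\ast\ast})\le I(F^{\ast\ast};Y)$ from an assumed conditional independence $F^{\ast}\perp F^{\ast\ast}\mid Y$ (via its lemma $I(A;B)\le\min\{I(A;Y),I(B;Y)\}$), and then combines the chain rule with the data-processing bound $I(F^{\ast\ast},G;Y)\le I(F^{\ast};Y)$. The only difference is in how the assumptions are handled: you justify that last bound through sufficiency of the Bayes posterior and state both idealizations explicitly, whereas the paper asserts the Markov chain $Y\to F^{\ast}\to(F^{\ast\ast},G)$ from the determinism of the networks and applies the conditional independence without checking it.
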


Proof is provided in Appendix~\ref{app:proofofproposition:bound-error}. 
This proposition provides an \textbf{explanatory clarification} by linking the model performance gap to the information loss induced by graph coarsening. 
In the bound of Eq.~\eqref{eq:MI-upperbound}, the gap between the two models is expressed by the mutual information $I(f(A, X; \Theta^{\ast});f(A, X; \Theta^{\ast \ast}))$, which measures how similar the outputs of the full-graph-trained model and the coarsened-graph-trained model are. 
Since the full-graph model $f(A, X; \Theta^{\ast})$ is unaffected by coarsening, the term $I(f(A, X; \Theta^{\ast}); Y)$ remains fixed. 
Therefore, the magnitude of the model gap is determined entirely by the quantity $\Omega = I( g(\mathcal{G} \setminus \mathcal{G}^{\prime}) ; Y |  f(A, X; \Theta^{\ast \ast}) )$, which refers to the graph information lost during coarsening. 
A larger $\Omega$ forces $I(f(A, X; \Theta^{\ast});f(A, X; \Theta^{\ast \ast}))$ to be smaller, producing a larger discrepancy between the full-graph and coarsened-graph models. 
In heterophilic graphs, the discarded region $\mathcal{G} \setminus \mathcal{G}^{\prime}$ contains diverse labels and informative local structure; thus making $\Omega$ large. 
Consequently, the mutual information between the two models becomes small, yielding the substantial performance drop observed empirically. 
In contrast, homophilic graphs discard much less label-relevant content, resulting in a much smaller gap.

\section{Proposed Method: Adaptive Complementary Enhancement}\label{sec:method}

We now present our proposed method, Adaptive Complementary Enhancement (ACE). 
We begin with a motivating toy experiment and then introduce the framework in detail.

\subsection{Motivating Toy Experiment: A Simple Auxiliary Loss Matters}\label{sec:method-motivation}
As demonstrated in Section~\ref{sec:preliminaries-challenge}, the heterophily challenge primarily arises from the loss of graph information during coarsening. 
A natural solution is therefore to \textbf{reintegrate the discarded information during training}. 
With this insight, we conduct a simple toy experiment: we augment the training via coarsening pipeline with an auxiliary loss that addresses the coarsening matrix $P^{\top}C$ and the original label $Y$. 
The resulting unified loss function is stated below, where $\lambda$ denotes the trade-off hyperparameter.
\begin{align}
\label{eq:simple-uniloss}
\tilde{\mathcal{L}} = & \underbrace{\mathcal{L}\left(f(A^{\prime}, X^{\prime}; \Theta), Y^{\prime}\right)}_{\text{Primary Coarsened-Graph Training Loss}} \notag\\ 
&\quad\quad\quad + \quad \lambda \cdot \underbrace{\mathcal{L}\left(P^{\top}C \cdot f(A^{\prime}, X^{\prime}; \Theta), Y\right)}_{\text{Auxiliary Loss}}\ .
\end{align}
\begin{table}[!t]
\caption{Toy experiment results. Performance gains over the vanilla coarsening-based training are highlighted in \textbf{bold}.}
\label{table-toyexp}
\centering
\setlength{\tabcolsep}{2pt}
\resizebox{\linewidth}{!}{
\begin{tabular}{ccccc}
\toprule
Model                & Method                         & Genius                  & Gamers                  & Pokec                  \\ \midrule
\multirow{6}{*}{GCN}    & SCAL                           & 67.47                   & 47.39                   & 50.44                  \\
                        & w/ $\tilde{\mathcal{L}}$  & 71.32 (\textbf{+3.85})  & 52.86 (\textbf{+5.47})  & 55.29 (\textbf{+4.85}) \\ \cmidrule(lr){2-5} 
                        & FGC                            & 70.13                   & 40.14                   & 53.81                  \\
                        & w/ $\tilde{\mathcal{L}}$   & 75.67 (\textbf{+5.54})  & 49.26 (\textbf{+9.12})  & 57.22 (\textbf{+3.41}) \\ \cmidrule(lr){2-5} 
                        & SGBGC                          & 71.22                   & 44.62                   & 55.48                  \\
                        & w/ $\tilde{\mathcal{L}}$ & 73.81 (\textbf{+2.59})  & 53.09 (\textbf{+8.47}) & 62.42 (\textbf{+6.94}) \\ \toprule
\multirow{6}{*}{GloGNN} & SCAL                           & 59.47                   & 46.28                   & 61.58                  \\
                        & w/ $\tilde{\mathcal{L}}$  & 67.41 (\textbf{+7.94}) & 53.11 (\textbf{+6.83}) & 67.74 (\textbf{+6.16}) \\ \cmidrule(lr){2-5} 
                        & FGC                            & 63.81                   & 45.92                   & 66.91                  \\
                        & w/ $\tilde{\mathcal{L}}$   & 72.33 (\textbf{+8.52})  & 51.26 (\textbf{+5.34})  & 71.44 (\textbf{+4.53}) \\ \cmidrule(lr){2-5} 
                        & SGBGC                          & 67.33                   & 47.15                   & 63.77                  \\
                        & w/ $\tilde{\mathcal{L}}$ & 73.26 (\textbf{+5.93})  & 54.31 (\textbf{+7.17})  & 69.36 (\textbf{+5.59}) \\ \toprule
\end{tabular}}
\end{table}

We follow the same experimental setup as in Section~\ref{sec:preliminaries-challenge} and compare the augmented loss \eqref{eq:simple-uniloss} against standard coarsening-based training pipelines. 
Remarkably, as shown in Table~\ref{table-toyexp}, the introduction of the simple auxiliary loss leads to substantial and consistent improvements across all coarsening-based methods and across diverse GNN backbones on heterophilic graphs. 
A more comprehensive investigation in Appendix~\ref{app:toyexp-full} reveals that the auxiliary loss benefits homophilic graphs as well. 
In summary, these positive results highlight the strong potential of the auxiliary loss that compensates for information discarded during coarsening.

\subsection{The ACE Framework}\label{sec:method-ACE}

Motivated by the observations from the toy experiments, we develop a heterophily-aware enhancement to coarsening-based GNN training by incorporating an auxiliary loss. 
This leads to our proposed framework, ACE, which consists of two core modules: (\textit{\romannumeral1}) \textbf{\textit{Refined Projector Construction}} and (\textit{\romannumeral2}) \textbf{\textit{Training With Unified Coarsening Loss}}.

\subsubsection{Step \romannumeral1: Refined Projector Construction}\label{sec:method-ACE-projector}

\paragraph{Parameterized Projector.} 
Although the simple auxiliary loss enhance training efficacy, its construction relies on the coarsening projector $P$, which is entirely dictated by the chosen graph coarsening algorithm: either classical~\citep{Gcoarse-1,Gcoarse-2,Gcoarse-3,Gcoarse-4,Gcoarse-6}, or learnable~\citep{Gcoarse-5,Gcoarse-7-FGC,Gcoarse-8,SGBGC}. 
None of existing designs specifically incorporate heterophily-related cues into the construction of the so-called projector. 
We therefore seek to empower the projector with heterophily-awareness.

To this end, we first construct the \textit{structural affinity} matrix $S$ and the \textit{feature affinity} matrix $F$, both in $\mathbb{R}^{n \times n'}$:
\begin{align}
&S = A C^{-1} P\ , \\
&F_{ij} = \exp(-\|X_i-\mu_j\|^2)\ ; \ \mu_j = \frac{1}{|\mathcal{C}_j|}\sum_{i\in\mathcal{C}_j}X_i\ .
\label{eq:information-view}
\end{align}
Following the intuition of \textit{Algebraic Multigrid} (AMG;~\citealt{AMG}), we note the term $S_{ij}$ measures how strongly node $i$ connects to supernode $j$,
and the term $F_{ij}$ is a \textit{Kernel Regression} similarity~\citep{Kernel-reg} between node $i$ and the centroid $\mu_j$. 
Together, they encode structural and feature connections while preserving coarsening structure. 
Using these two matrices, we devise a learnable, heterophily-aware projector:
\begin{align}
\label{eq:parameterized-P}
&P_{ij}(\phi) = \frac{\exp(\alpha_{ij})}{\sum_{k\in\mathcal{N}(i)}\exp(\alpha_{ik})}\ , \\ 
&\alpha_{ij} := f_{\phi}(\left[S_{ij},F_{ij}\right])\ ,
\end{align}
where $f_{\phi}$ is an MLP parametrized with $\phi$. 
Analogous to graph attention mechanism~\citep{GAT}, $P(\phi)$ learns how each fine node should associate with each supernode, balancing structural and feature signals through~$\phi$.

\paragraph{Heterophily-Aware Optimization.} 
To ensure that $P(\phi)$ correctly encodes heterophily, we study principled learning of the projector $P$. 
Conventional attributed graph coarsening specifies the projector through optimizing a combination of Dirichlet energy $\mathcal{E}(\mathcal{G})$ and feature reconstruction $\|X-PP^{\top}X\|^{2}$~\citep{Gcoarse-7-FGC,UGC}, 
thereby promoting smooth GNN predictions—an inductive bias suited for homophilic graphs. 
However, such objectives are inapplicable under heterophily, where neighboring nodes often carry contrasting information.

To overcome this issue, we draw inspiration from~\citep{graph-anisotropic-diffusion-8}, which constructs a heterophily-aware Dirichlet energy via \textit{anisotropic diffusion}~\citep{anisotropic-diffusion}, and propose the \textbf{Anisotropic Structural Regularization} term, defined as follows:
\begin{align}
\label{eq:ASR}
& \mathcal{J}_{\text{ASR}}(\phi) = \underbrace{\sum_{i,j \in E} \omega_{ij} \cdot \|(P(\phi)\bm{\mu})_{i}-(P(\phi)\bm{\mu})_{j}\|^2}_{\text{Anisotropic Smoothness}} \notag \\
&\quad\quad\quad\quad\quad + \, \beta \cdot \underbrace{\|X-P(\phi)\bm{\mu}\|_{F}^2}_{\text{Reconstruction Cost}}
\end{align}
where $\omega_{ij} := \exp(-\|X_i-X_j\|^2)$, $\bm{\mu}$ represents the stacked centroids $\{\mu_1,\mu_2,...,\mu_{n^{\prime}}\}$ (see Eq.~\eqref{eq:information-view}), and $\beta$ is the tradeoff coefficient.

The core idea behind this design is the superior ability of graph anisotropic diffusion to model heterophilic relationships in graphs, which regular isotropic diffusion adopted in most GNNs cannot capture~\citep{graph-anisotropic-diffusion-2,graph-anisotropic-diffusion-5,graph-anisotropic-diffusion-7,graph-diffusion-survey}. 
Notably, prior methods apply anisotropic diffusion in supervised GNN training, whereas we use it in an \textbf{unsupervised} manner, relying solely on the graph data. 
Despite lack of supervision, minimizing the proposed regularization term $\mathcal{J}_{\text{ASR}}(\phi)$ ensures that the projector $P(\phi^{\ast})$ is \textit{heterophily-aware}, as shown in the following proposition:
\begin{proposition}
\label{proposition:ASR-heterophily}
Let $\phi^{\ast}$ denote the minimizer of $\mathcal{J}_{\text{ASR}}(\phi)$. 
The resulting projector $P(\phi^{\ast})$ acts as a conditional spectral filter that interpolates between two distinct regimes below based on local homophily:
\begin{itemize}[leftmargin=*,parsep=1pt,itemsep=1pt,topsep=1pt]
    \item \textbf{Heterophilic Regime:} In regions where $\|X_i - X_j\| \to \infty$, $P(\phi^{\ast})$ approaches the solution of identity reconstruction, prioritizing high-frequency signal fidelity over structural smoothness.
    \item \textbf{Homophilic Regime:} In regions where $\|X_i - X_j\| \to 0$, $P(\phi^{\ast})$ approaches the solution of isotropic laplacian smoothing, enforcing local smoothness.
\end{itemize}
\end{proposition}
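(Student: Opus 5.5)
The plan is to work with the reconstructed signal $Z := P(\phi)\bm{\mu} \in \mathbb{R}^{n\times d}$ and rewrite the objective in matrix form. Write $W=(\omega_{ij})$ for the Gaussian-weighted adjacency supported on $E$, $D_W$ for its degree matrix, and $L_W = D_W - W$ for its Laplacian. Then
\begin{equation*}
\mathcal{J}_{\text{ASR}} = \operatorname{trace}(Z^{\top} L_W Z) + \beta \|X - Z\|_F^2 .
\end{equation*}
We first relax the problem and treat $Z$ as a free variable. This is justified heuristically when $f_\phi$ is expressive enough that $\{P(\phi)\bm{\mu}\}$ approximates the relevant $Z$; we will state this as an assumption, in the spirit of the paper's explanatory propositions. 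The relaxed problem is strictly convex, since $L_W \succeq 0$ and $\beta>0$. The first-order condition $(L_W + \beta I)Z = \beta X$ gives the closed form
\begin{equation*}
Z^{\ast} = \beta (L_W + \beta I)^{-1} X .
\end{equation*}
This is a spectral low-pass filter $h(\lambda) = \beta/(\lambda+\beta)$ acting on the eigenbasis of $L_W$. Because $W$ depends on the features, the filter is conditional on local homophily. We then identify $P(\phi^{\ast})$ as the projector whose output approximates $Z^{\ast}$.

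Next we analyse the two regimes through the limits of $\omega_{ij} = \exp(-\|X_i-X_j\|^2)$.

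\emph{Heterophilic regime.} If $\|X_i - X_j\|\to\infty$ on the edges of a region, then $\omega_{ij}\to 0$, so the rows and columns of $L_W$ indexed by that region vanish. The filter $h$ tends to $1$ there, and $Z^{\ast}_i \to X_i$. Equivalently, the per-node objective reduces to $\beta\|X_i - Z_i\|^2$, whose minimizer is the identity reconstruction. All spectral components, including high frequencies, are preserved.

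\emph{Homophilic regime.} If $\|X_i-X_j\|\to 0$, then $\omega_{ij}\to 1$ and $L_W$ tends to the combinatorial Laplacian $D - A$ restricted to that region. In that case $Z^{\ast} \to \beta(D-A+\beta I)^{-1}X$, which is exactly the Tikhonov / isotropic Laplacian smoothing solution. It attenuates frequencies with $\lambda \gg \beta$ and so enforces local smoothness. To match the normalized $L$ used elsewhere in the paper, we can note that the same argument holds verbatim with degree-normalized weights.

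For mixed graphs, where edges fall in different regimes, we argue by continuity. The map $W \mapsto (L_W+\beta I)^{-1}$ is continuous, and in fact analytic, on $W\ge 0$, because $L_W + \beta I \succeq \beta I$ is uniformly invertible. Hence $Z^{\ast}$ depends continuously on the edge weights, interpolating between the two limits edge by edge.

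The main obstacle is the step from the free-$Z$ minimizer to the parameterized $P(\phi^{\ast})$. The set $\{P(\phi)\bm{\mu}\}$ is constrained: each row is a convex combination of centroids, via the softmax over $\mathcal{N}(i)$. Exact identity reconstruction is therefore generally unattainable. The first thing to try is to state the regimes as ``$P(\phi^{\ast})\bm{\mu}$ is the projection of $Z^{\ast}$ onto the attainable set.'' In the heterophilic limit, this makes row $i$ of $P(\phi^{\ast})$ concentrate on the centroid nearest $X_i$, which is the best attainable identity reconstruction. In the homophilic limit, it makes neighbouring rows agree, which gives smoothing. If needed, we would invoke a universal-approximation assumption on $f_\phi$ and a sufficiently fine partition so that the attainable set is dense enough.
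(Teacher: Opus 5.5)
Your argument is correct at the level of rigor the paper itself uses, and its mechanism is the same. Both proofs treat $P(\phi)\bm{\mu}$ as a free variable, impose first-order optimality, and read off the limits $\omega_{ij}\to 0$ and $\omega_{ij}\to 1$.

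The difference is global versus local. The paper isolates the terms depending on $(P(\phi)\bm{\mu})_i$ for one node and sets that gradient to zero. This gives $(P\bm{\mu})_i=\bigl(X_i+\lambda\sum_{j}\omega_{ij}(P\bm{\mu})_j\bigr)/\bigl(1+\lambda\sum_j\omega_{ij}\bigr)$, with $\lambda$ in the role of $1/\beta$, which is exactly row $i$ of your system $(L_W+\beta I)Z=\beta X$. Because the neighbours' unknowns stay on the right-hand side, the paper's ``closed-form update rule'' is really a fixed-point relation, and its homophilic case only exhibits a neighbour-averaging step.

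Solving the system globally buys you more:
\begin{itemize}
\item strict convexity gives existence and uniqueness of the relaxed minimizer;
\item $\beta(L_W+\beta I)^{-1}$ is an honest filter with response $\beta/(\lambda+\beta)$ on the spectrum of $L_W$, which is what actually substantiates ``conditional spectral filter'' in the statement;
\item analyticity of $W\mapsto(L_W+\beta I)^{-1}$ turns the paper's heuristic continuity remark into a precise interpolation statement.
\end{itemize}
In exchange, the paper's local form gives a more immediate node-by-node reading of which neighbours get decoupled. Making the relaxation of $P(\phi)\bm{\mu}$ to an unconstrained $Z$ explicit is also an improvement. The paper performs the same step silently, by differentiating with respect to $(P(\phi)\bm{\mu})_i$ as if it were free, so this is not a gap relative to the paper.

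The tentative remedy in your last paragraph, however, is not right as written and should be corrected or dropped.
\begin{itemize}
\item Since $\operatorname{trace}(Z^{\top}L_WZ)+\beta\|X-Z\|_F^2=\operatorname{trace}\bigl((Z-Z^{\ast})^{\top}(L_W+\beta I)(Z-Z^{\ast})\bigr)+\text{const}$, the minimizer over an attainable set is the projection of $Z^{\ast}$ in the metric induced by $L_W+\beta I$, not the Euclidean one.
\item Even in a decoupled heterophilic block, where that metric reduces to $\beta I$, the best attainable row is the Euclidean projection of $X_i$ onto the convex hull of node $i$'s candidate centroids. This is generally not the nearest centroid. With $\mu_1=(0,0)$, $\mu_2=(2,0)$, $X_i=(0.9,0.1)$, the optimum puts weights $0.55$ and $0.45$ on the two centroids rather than all weight on $\mu_1$.
\item Universal approximation of $f_\phi$ alone does not make every softmax weighting attainable. $f_\phi$ is shared across pairs and only sees $[S_{ij},F_{ij}]$, so you would also need these inputs to be pairwise distinct.
\end{itemize}
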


The proof is provided in Appendix~\ref{app:proofofproposition:ASR-heterophily}. 
This proposition provides a \textbf{local optimality characterization} of the learned projector, demonstrating that ACE naturally adapts to both homophilic and heterophilic graph structures: In heterophilic regions, where $\|X_i - X_j\| \to \infty$, the weights $\omega_{ij}$ vanish, suppressing the smoothness term and allowing the projector to prioritize high-frequency, heterophily-specific information; In homophilic regions, where $\|X_i - X_j\| \to 0$, the smoothness constraint remains active, encouraging projector to preserve locally consistent, low-frequency structure.

By minimizing Eq.~\eqref{eq:ASR} with standard optimizers such as Adam~\citep{Adamoptimizer}, we can easily learn the projector $P(\phi^{\ast})$ adaptive to heterophilic graphs.

\subsubsection{Step \romannumeral2: Training With Joint Coarsening Loss}\label{sec:method-ACE-training}
The optimized projector $P(\phi^{\ast})$ can then be used to construct the auxiliary loss term in Eq.~\eqref{eq:simple-uniloss}. 
However, the combined loss in Eq.~\eqref{eq:simple-uniloss} is inherently inflexible: it relies on a manually chosen tradeoff hyperparameter, which does not adapt to the dynamics of training and may vary across datasets or backbones. 
To overcome this inconvenience and better leverage the supervision signals available in coarsening-based GNN training, we further enhance the primitive combined loss~\eqref{eq:simple-uniloss} by employing \textit{Homoscedastic Uncertainty weighting}~\citep{homoscedastic-uncertainty}. 
This technique interprets the primary and auxiliary objectives as a \textit{multi-task learning} problem~\citep{multi-task-learning}, allowing their relative contributions to be balanced automatically via learnable task uncertainties. 
The resulting ACE objective is:
\begin{align}
\label{eq:ACE-loss-HU}
\mathcal{L}_{\text{ACE}} =& \frac{1}{2\sigma_1^2}\underbrace{\mathcal{L}\left(P(\phi^{\ast})^{\top}C \cdot f(A^{\prime}, X^{\prime}; \Theta), Y\right)}_{\text{Auxiliary Loss}} \notag \\
& + \, \frac{1}{2\sigma_2^2}\underbrace{\mathcal{L}\left(f(A^{\prime}, X^{\prime}; \Theta), Y^{\prime}\right)}_{\text{Primary Loss}} + \, \log(\sigma_1 \sigma_2)\ ,
\end{align}
where $\sigma_1$ and $\sigma_2$ are learnable uncertainty parameters that dynamically adjust weights of the two losses during training.

\subsubsection{Complexity Analysis}\label{sec:method-ACE-complexity}

We analyze the computational overhead introduced by ACE relative to vanilla training via coarsening, 
focusing on the cost of constructing the refined projector and the additional terms incurred by the ACE optimization framework. 
We follow the notations in Section~\ref{sec:preliminaries-background}.

\paragraph{Complexity of Refined Projector Construction.} 
Constructing the refined projector $P(\phi^{\ast})$ consists of a light pre-computation stage and an iterative optimization over $T$ epochs. 
The pre-computation refers to calculating $\omega_{ij}$ over all fine-level edges in $\mathcal{O}(|E|d)$ time, and forming the structure and feature affinities restricted by the coarsening assignment $\{\mathcal{C}_{1}, \mathcal{C}_{2}, ..., \mathcal{C}_{n^{\prime}}\}$ with $\mathcal{O}(n d)$ cost. 
An optimization step then includes a forward pass of the parametric MLP map $f_{\phi}$ over the sparse candidate pairs with $\mathcal{O}(n)$ cost; 
the sparse lifting operation ($P(\phi)\bm{\mu}$ are then performed with $\mathcal{O}(nd)$ cost; ultimately, obtaining the reconstruction and anisotropic smoothness terms costs $\mathcal{O}(nd)$ and $\mathcal{O}(|E|d)$ time, respectively.

Aggregating all the components, the total complexity of refined projector construction is $\boldsymbol{\mathcal{O}(T(nd + |E|d))}$, which is linear in the graph size and remains lightweight compared with a standard GNN layer.

\paragraph{Overhead in Training with Joint Coarsening Loss.} 
Training with the ACE loss is compatible with the standard coarsened-graph pipeline on $\mathcal{G}^{\prime}$, while adding a step to compute the auxiliary loss by lifting the coarsened logits back to the fine graph. 
This lifting step incurs an additional cost of approximately $\boldsymbol{\mathcal{O}(nc)}$ time. Since the feature dimension $d$ is typically much larger than the number of classes $c$, this overhead is negligible relative to the dominant cost of vanilla coarsening-based training, which is on the order of $\mathcal{O}(nd^2 + |E^{\prime}|d)$ (let $E^{\prime}$ be the edge set of $\mathcal{G}^{\prime}$).

\paragraph{Remark.} In summary, ACE improves performance by re-leveraging fine-grained supervision while maintaining the efficiency advantage of coarsening-based training. 
The inference complexity remains identical to the baseline, as the auxiliary branch is discarded after training. 
This claim is verified in the following section, where ACE is shown to improve performance with minimal computational overhead.

\section{Experiments}\label{sec:experiments}
We conduct extensive experiments to evaluate the effectiveness of ACE across diverse coarsening-based GNN training pipelines. 
Our evaluation covers both large-scale heterophilic and homophilic benchmarks and multiple GNN backbones. 
We organize the study around the following research questions:
\begin{itemize}[leftmargin=15pt,parsep=2pt,itemsep=2pt,topsep=2pt]
    \item {\bf RQ1:} How much does ACE improve performance over vanilla coarsening training pipelines?
    \item {\bf RQ2:} How does the refined projector $P(\phi^{\ast})$ influence training behavior?
    \item {\bf RQ3:} How does ACE influence the scalability and efficiency of coarsening-based training?
    \item {\bf RQ4:} How does ACE narrow the performance gap between conventional methods and state-of-the-art ones?
\end{itemize}

\subsection{Experimental Setup}\label{sec:experiments-setup}
We evaluate four state-of-the-art coarsening-based GNN training pipelines: SCAL~\citep{GC-scal}, FGC~\citep{Gcoarse-7-FGC}, UGC~\citep{UGC}, and SGBGC~\citep{SGBGC}. 
For our ACE, we parameterize $f_{\phi}$ as a two-layer MLP with 500 hidden dimensions and tune the trade-off coefficient $\beta$ from \{0.1,1,10\} during projector optimization.

We use five large-scale heterophilic datasets from~\citep{dataset6-large-hetero}: Genius, Gamers, Pokec, Snap, and arXiv-year, and two large homophilic datasets from the OGB~\citep{dataset5-ogb}: Ogbn-arXiv and Ogbn-products.
Full dataset statistics are provided in Appendix~\ref{app:expdetails-dataset}.

We consider six GNN backbones: GCN~\citep{GCN}, LINKX~\citep{dataset6-large-hetero}, GLOGNN~\citep{glognn++}, GPRGNN~\citep{GPRGNN}, SGFormer~\citep{SGFormer}, and Polynormer~\citep{Polynormer}, covering models specialized for heterophily, homophily, or both. 
\textbf{Due to space constraints}, we report results for three backbones (GCN, LINKX, and GLOGNN) in the main text, and \textbf{defer the remaining results to Appendix~\ref{app:expdetails-moreexp}}.

We set a 10\% coarsening ratio for all pipelines and finetune based on the recommended configurations from the original papers and benchmark repositories. 
All models are trained for 500 epochs using the Adam optimizer~\citep{Adamoptimizer}, with early stopping triggered after 50 epochs of no validation improvement. 
Training follows a consistent configuration across all methods: learning rate $0.01$, weight decay $5e-4$, and dropout $0.5$. 
All reported numbers are averaged over 10 independent runs.

Additional training details and complementary experimental results are provided in Appendix~\ref{app:expdetails}.


\begin{table*}[!t]
\caption{Performance of integrating ACE into GCN, LINKX, and GloGNN. Improvements on heterophilic datasets are highlighted in \textbf{bold}. Standard deviations are reported in Table~\ref{table-mainexp-1-std}; results for the remaining pipelines are provided in Table~\ref{table-mainexp-2}.}
\label{table-mainexp-1}
\centering
\setlength{\tabcolsep}{4pt}
\resizebox{\textwidth}{!}{
\begin{tabular}{ccccccc|cc}
\toprule
\multirow{2}{*}{Backbone} & \multirow{2}{*}{Method} & \multicolumn{5}{c|}{Heterophilic Graphs}                                                                                       & \multicolumn{2}{c}{Homophilic Graphs} \\ \cmidrule(lr){3-9} 
                          &                         & Genius                  & Gamers                  & Pokec                   & Snap                    & arXiv-year             & ogbn-arXiv             & ogbn-products          \\ \midrule
\multirow{9}{*}{GCN}      & -                       & 87.42                   & 62.18                   & 75.45                   & 45.65                   & 46.02                  & 71.74             & 75.64             \\ \cmidrule(lr){2-9} 
                          & SCAL                    & 67.47                   & 47.39                   & 50.44                   & 22.66                   & 27.59                  & 63.42             & 68.37             \\
                          & + ACE                   & 77.63 (\textbf{+10.16}) & 55.63 (\textbf{+8.24})  & 58.12 (\textbf{+7.68})  & 33.47 (\textbf{+10.81}) & 34.53 (\textbf{+6.94}) & 66.03 (+2.61)     & 70.38 (+2.01)     \\ \cmidrule(lr){2-9} 
                          & FGC                     & 70.13                   & 40.14                   & 53.81                   & 23.06                   & 25.49                  & 65.52             & 71.28             \\
                          & + ACE                   & 77.32 (\textbf{+7.19})  & 52.52 (\textbf{+12.38}) & 61.67 (\textbf{+7.86})  & 32.73 (\textbf{+9.67})  & 31.91 (\textbf{+6.42}) & 67.02 (+1.50)     & 72.47 (+1.19)     \\ \cmidrule(lr){2-9} 
                          & UGC                     & 69.37                   & 42.70                   & 51.74                   & 21.49                   & 29.18                  & 63.06             & 68.92             \\
                          & + ACE                   & 75.28 (\textbf{+5.91})  & 53.83 (\textbf{+11.13}) & 59.37 (\textbf{+7.63})  & 30.82 (\textbf{+9.33})  & 33.49 (\textbf{+4.31}) & 64.87 (+1.81)     & 70.19 (+1.27)     \\ \cmidrule(lr){2-9} 
                          & SGBGC                   & 71.22                   & 44.62                   & 55.48                   & 25.16                   & 27.02                  & 64.12             & 71.96             \\
                          & + ACE                   & 76.26 (\textbf{+5.04})  & 56.28 (\textbf{+11.66}) & 64.04 (\textbf{+8.56})  & 31.93 (\textbf{+6.77})  & 33.75 (\textbf{+6.73}) & 66.22 (+2.10)     & 73.13 (+1.17)     \\ \midrule
\multirow{9}{*}{LINKX}    & -                       & 90.77                   & 66.06                   & 82.04                   & 61.95                   & 56.00                  & 69.54             & 74.59             \\ \cmidrule(lr){2-9} 
                          & SCAL                    & 60.62                   & 50.13                   & 56.48                   & 40.13                   & 42.66                  & 57.71             & 62.49             \\
                          & + ACE                   & 73.29 (\textbf{+12.67}) & 57.11 (\textbf{+6.98})  & 65.77 (\textbf{+9.29})  & 47.64 (\textbf{+7.51})  & 47.18 (\textbf{+4.52}) & 61.02 (+3.31)     & 64.84 (+2.35)     \\ \cmidrule(lr){2-9} 
                          & FGC                     & 65.49                   & 42.48                   & 57.63                   & 38.17                   & 44.37                  & 59.38             & 67.52             \\
                          & + ACE                   & 74.71 (\textbf{+9.22})  & 52.33 (\textbf{+9.85})  & 70.47 (\textbf{+12.84}) & 49.08 (\textbf{+10.91}) & 47.65 (\textbf{+3.28}) & 61.73 (+2.35)     & 69.21 (+1.69)     \\ \cmidrule(lr){2-9} 
                          & UGC                     & 63.61                   & 44.55                   & 52.38                   & 41.22                   & 41.49                  & 55.46             & 64.37             \\
                          & + ACE                   & 71.08 (\textbf{+7.47})  & 50.69 (\textbf{+6.14})  & 67.24 (\textbf{+14.86}) & 47.33 (\textbf{+6.11})  & 46.21 (\textbf{+4.72}) & 60.28 (+4.82)     & 66.52 (+2.15)     \\ \cmidrule(lr){2-9} 
                          & SGBGC                   & 62.68                   & 47.60                   & 54.18                   & 42.91                   & 45.42                  & 61.55             & 64.93             \\
                          & + ACE                   & 71.53 (\textbf{+8.85})  & 56.48 (\textbf{+8.88})  & 63.24 (\textbf{+9.06})  & 46.47 (\textbf{+3.56})  & 49.02 (\textbf{+3.60}) & 63.88 (+2.33)     & 66.87 (+1.94)     \\ \midrule
\multirow{9}{*}{GloGNN}   & -                       & 90.66                   & 66.19                   & 83.00                   & 62.09                   & 54.68                  & 72.68             & 77.48             \\ \cmidrule(lr){2-9} 
                          & SCAL                    & 59.47                   & 46.28                   & 61.58                   & 43.84                   & 40.91                  & 57.09             & 66.53             \\
                          & + ACE                   & 72.29 (\textbf{+12.82}) & 55.46 (\textbf{+9.18})  & 70.34 (\textbf{+8.76})  & 49.28 (\textbf{+5.44})  & 46.26 (\textbf{+5.35}) & 59.20 (+2.11)     & 67.91 (+1.38)     \\ \cmidrule(lr){2-9} 
                          & FGC                     & 63.81                   & 45.92                   & 66.91                   & 42.70                   & 43.28                  & 61.73             & 70.46             \\
                          & + ACE                   & 74.77 (\textbf{+10.96}) & 53.42 (\textbf{+7.50})  & 74.27 (\textbf{+7.36})  & 50.38 (\textbf{+7.68})  & 46.62 (\textbf{+3.34}) & 63.78 (+2.05)     & 71.93 (+1.47)     \\ \cmidrule(lr){2-9} 
                          & UGC                     & 61.49                   & 42.96                   & 64.04                   & 45.61                   & 40.36                  & 60.29             & 68.74             \\
                          & + ACE                   & 73.68 (\textbf{+12.19}) & 51.77 (\textbf{+8.81})  & 72.47 (\textbf{+8.43})  & 49.71 (\textbf{+4.10})  & 45.24 (\textbf{+4.88}) & 63.21 (+2.92)     & 70.87 (+2.13)     \\ \cmidrule(lr){2-9} 
                          & SGBGC                   & 67.33                   & 47.15                   & 63.77                   & 45.26                   & 41.27                  & 61.24             & 69.11             \\
                          & + ACE                   & 78.43 (\textbf{+11.10}) & 56.29 (\textbf{+9.14})  & 74.55 (\textbf{+10.78}) & 52.82 (\textbf{+7.56})  & 47.50 (\textbf{+6.23}) & 63.63 (+2.39)     & 72.18 (+3.07)     \\ \bottomrule
\end{tabular}}
\end{table*}

\subsection{RQ1: Performance Gains from ACE}\label{sec:experiments-mainexp}
We examine how ACE improves the established coarsening training pipelines.
As shown in Tables~\ref{table-mainexp-1} and~\ref{table-mainexp-2}, adding ACE yields consistent and significant gains across all backbones and datasets. 
The gains are notably striking on heterophilic graphs—where coarsening tends to discard more essential graph information—\textbf{with increases of up to 14.86}. 
These confirm that ACE markedly enhances the efficacy of coarsening training, especially under heterophilic conditions.

Beyond these overall improvements, ACE also significantly outperforms the naive auxiliary loss used in the toy experiments. 
By using the refined projector, ACE achieves \textbf{roughly twice the performance gain} of the naive formulation. 
This highlights the value of our refinement: it not only strengthens the effect of reintegrating the discarded information but also validates the central premise that restoring lost information is essential for handling heterophily in coarsening-based training.

Moreover, ACE also benefits other graph-reduction methods beyond coarsening, such as \textit{graph condensation}~\citep{Gcond} (Table~\ref{table-convmatch-gcond}), and remains robust even under \textbf{extreme coarsening ratio} (Table~\ref{table-0.01coarse}), highlighting its generality and broad applicability.

\subsection{RQ2: Impact of the Refined Projector}\label{sec:experiments-projector}
The optimization of Eq.~\eqref{eq:ASR} for obtaining $P(\phi^{\ast})$ is governed by a trade-off hyperparameter $\beta$, which balances the reconstruction and anisotropic regularization terms and thus shapes the refined projector. 
To evaluate how sensitive the learned projector—and consequently the coarsening training—is to this choice, we conduct an ablation study over a wide range of $\beta$ within \{0.01,0.1,1,10,100\}. 
We perform this analysis using two strong pipelines (SCAL and SGBGC) and two large heterophilic benchmarks (Pokec and Snap). 
Additional results for UGC and FGC are provided in Appendix~\ref{app:expdetails-moreexp}.

\begin{figure*}[!t]
  \centering
    \subfloat[SCAL on Pokec.]{
    \label{fig:beta-ablation-scal-pokec}
    \includegraphics[width=0.235\linewidth]{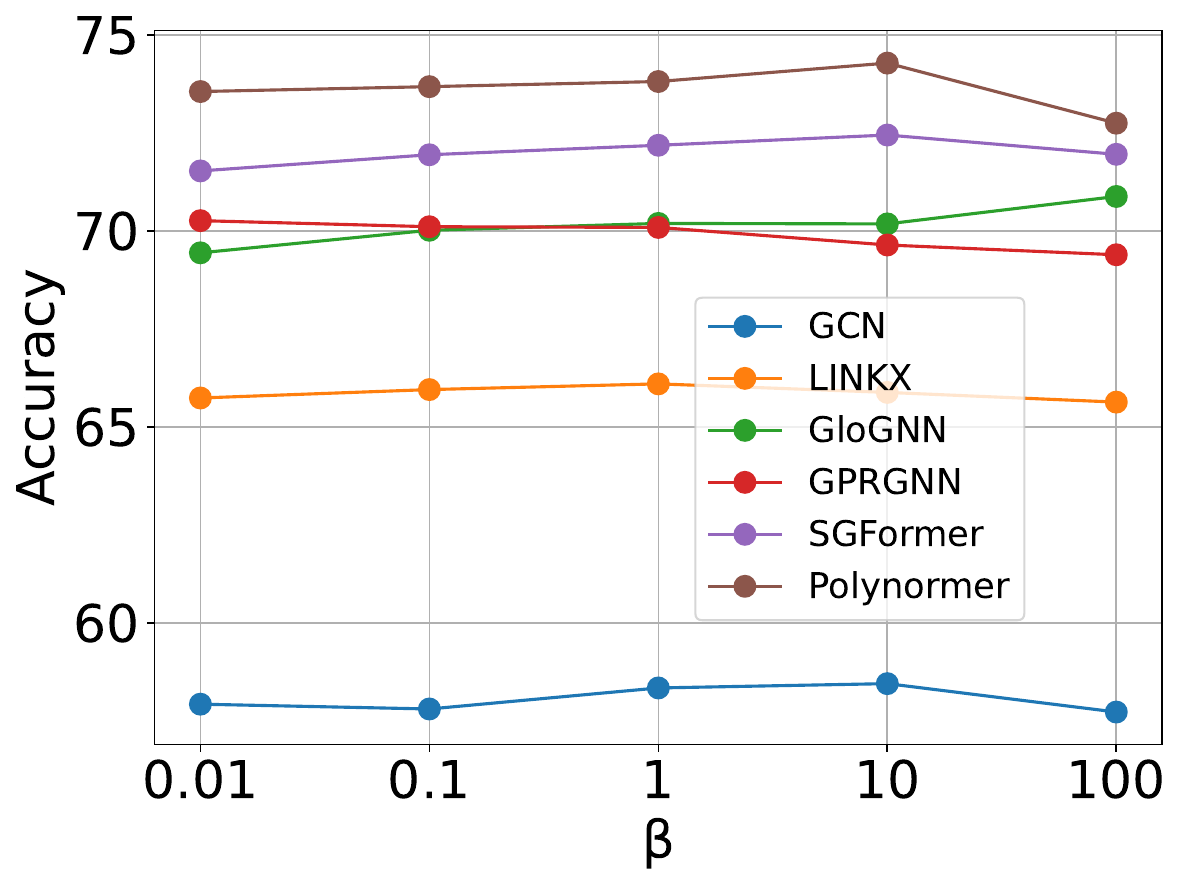}
    }
    \subfloat[SCAL on Snap.]{
    \label{fig:beta-ablation-scal-snap}
    \includegraphics[width=0.235\linewidth]{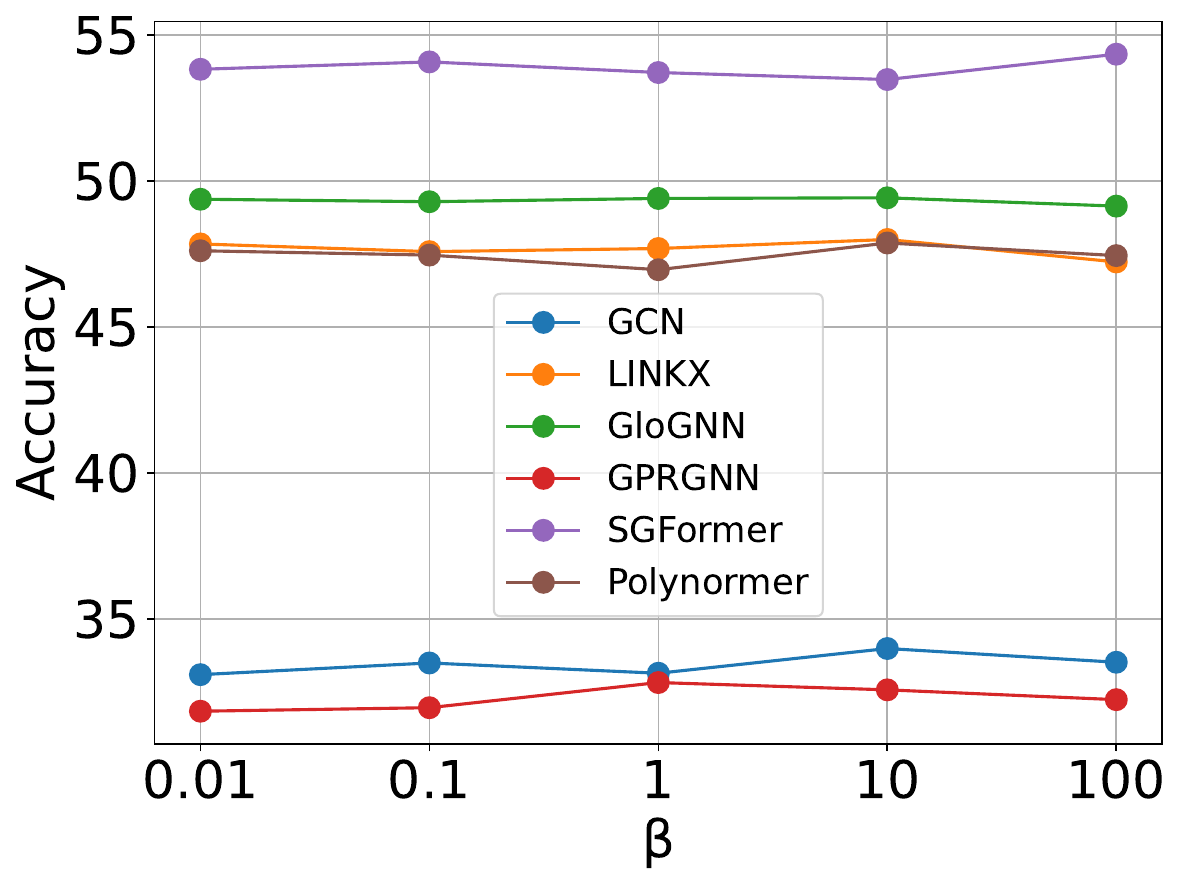}
    }
    \subfloat[SGBGC on Pokec.]{
    \label{fig:beta-ablation-sgbgc-pokec}
    \includegraphics[width=0.235\linewidth]{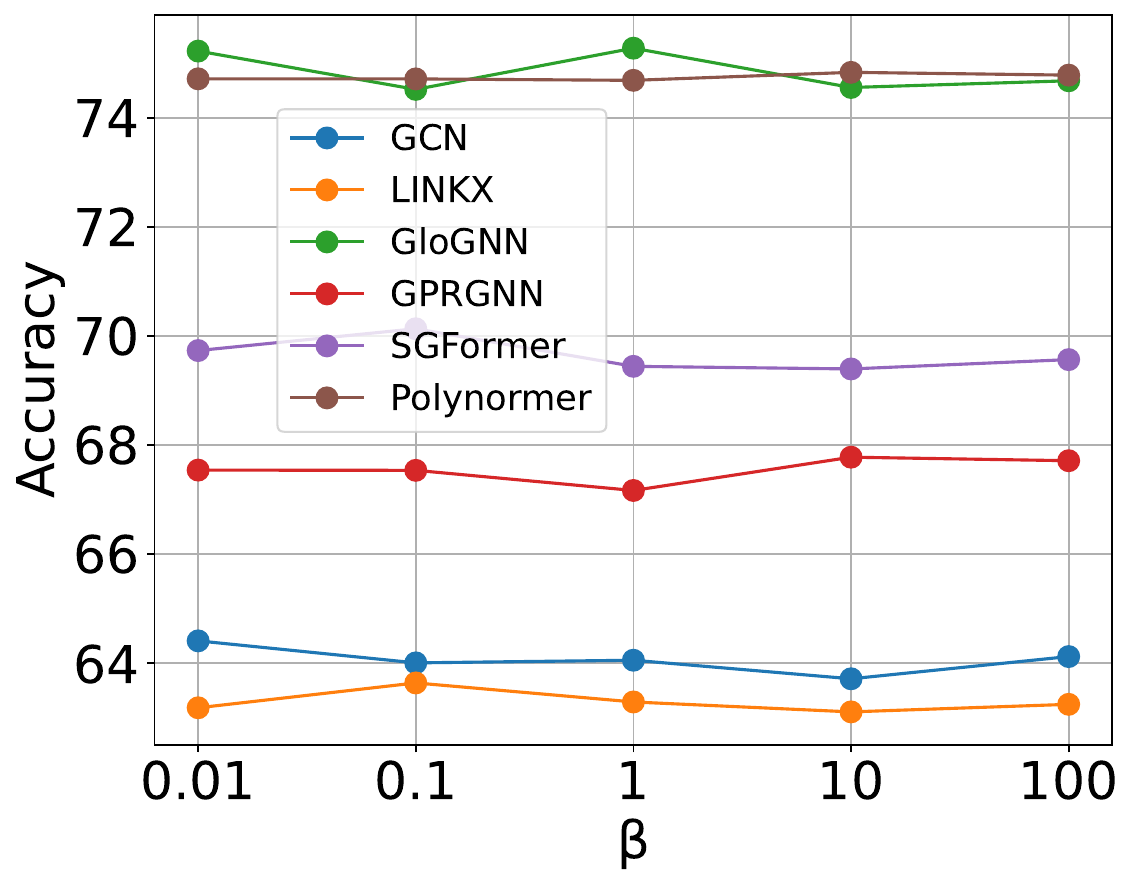}
    }
    \subfloat[SGBGC on Snap.]{
    \label{fig:beta-ablation-sgbgc-snap}
    \includegraphics[width=0.235\linewidth]{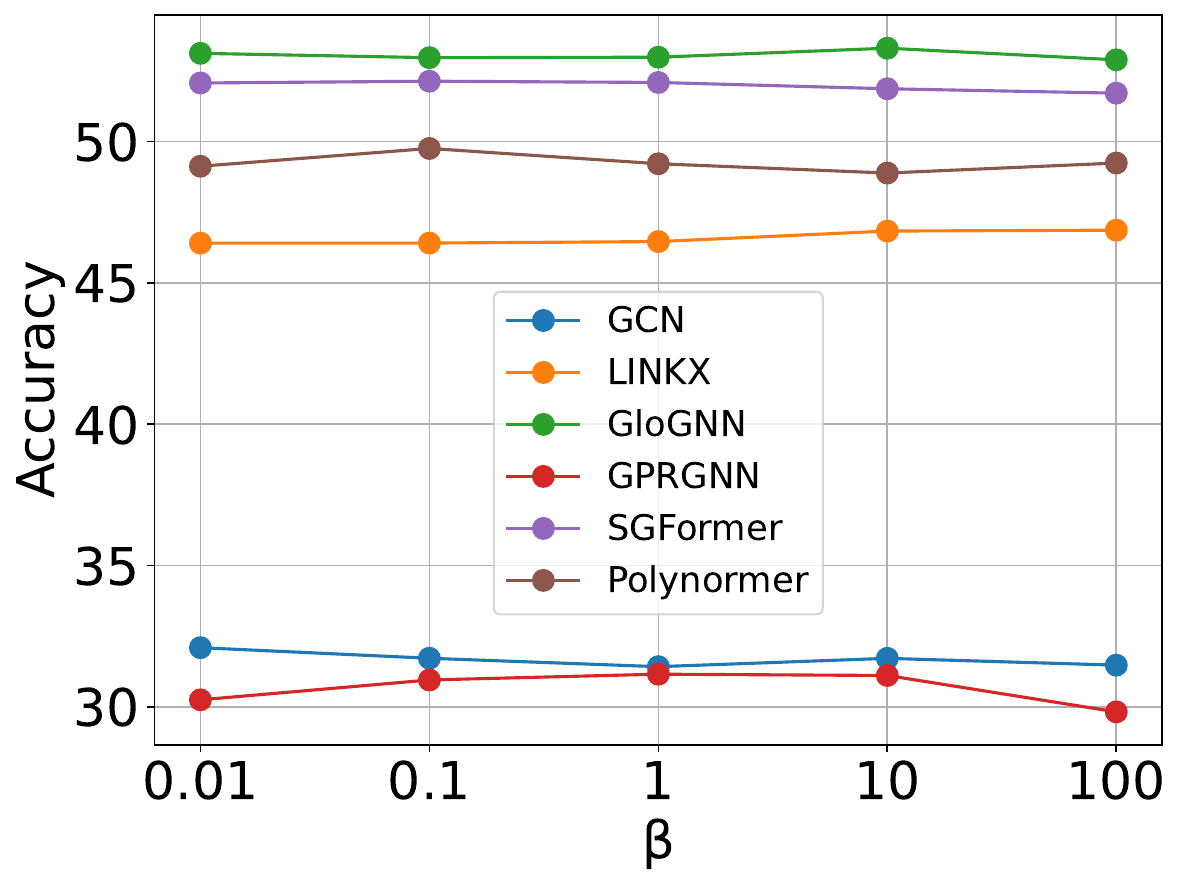}
    }
  \caption{Effect of $\beta$ across coarsening pipelines, GNN backbones, and datasets. \textbf{More results can be found in Figure~\ref{fig:beta-ablation-full}.}}
  \label{fig:beta-ablation}
\end{figure*}

Figure~\ref{fig:beta-ablation} (and Figure~\ref{fig:beta-ablation-full}) showcase that varying $\beta$ across two orders of magnitude produces only minor performance differences, consistently across datasets and backbone GNNs. 
This highlights that the learning dynamics of the refined projector $P(\phi^{\ast})$ are highly robust to the choice of $\beta$. 
Although occasional small fluctuations can be observed, the overall trend remains stable, and setting $\beta$ be 1 already yields superior performance without further tuning. 
To sum up, such insensitivity to hyperparameter choice makes ACE \textbf{practically convenient}, requiring minimal effort to deploy while preserving strong performance across diverse settings.

\paragraph{Beyond the Projection Module.} Readers are encouraged to consult the additional ablation studies in Appendix~\ref{app:expdetails-moreexp} (Tables~\ref{table-ablation-full-GCN} and~\ref{table-ablation-full-GPRGNN}), which investigate the remaining components of ACE and offer an extensive understanding of their isolated contributions beyond the projector module.


\begin{figure}[!t]
  \centering
    \subfloat[Preprocessing time: SCAL vs. SCAL+ACE]{
    \label{fig:preprocessing-scal}
    \includegraphics[width=.99\linewidth]{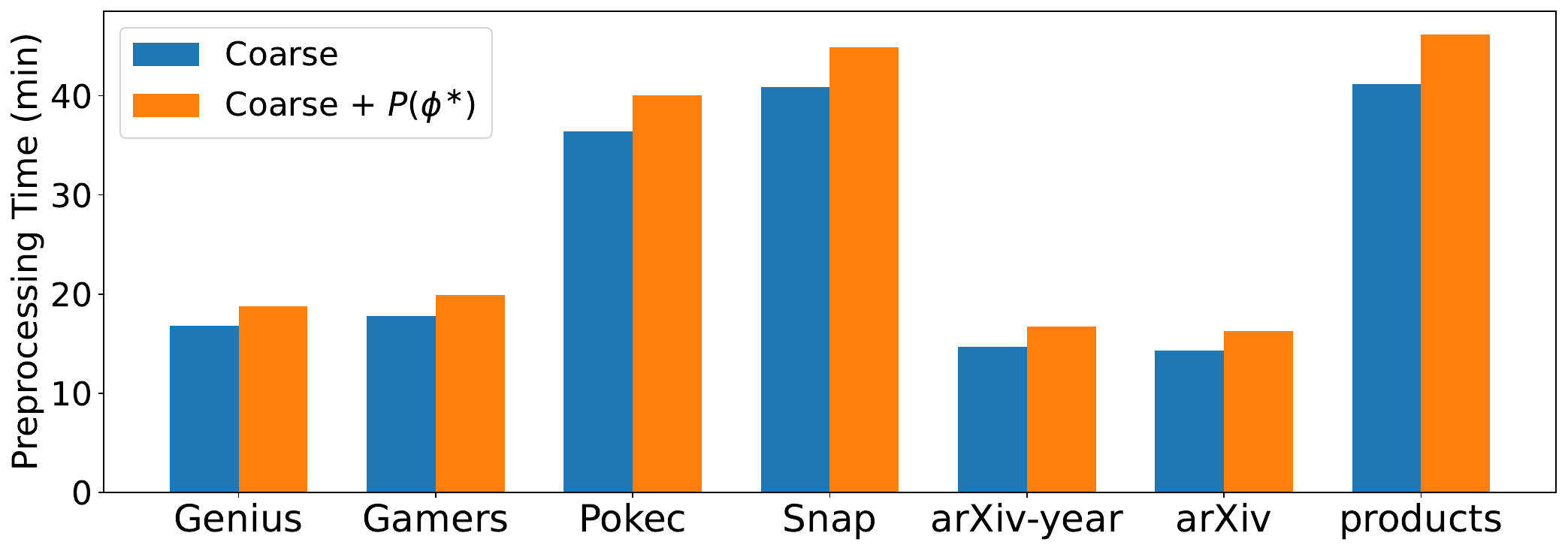}
    } \\
    \subfloat[Preprocessing time: SGBGC vs. SGBGC+ACE]{
    \label{fig:preprocessing-sgbgc}
    \includegraphics[width=.99\linewidth]{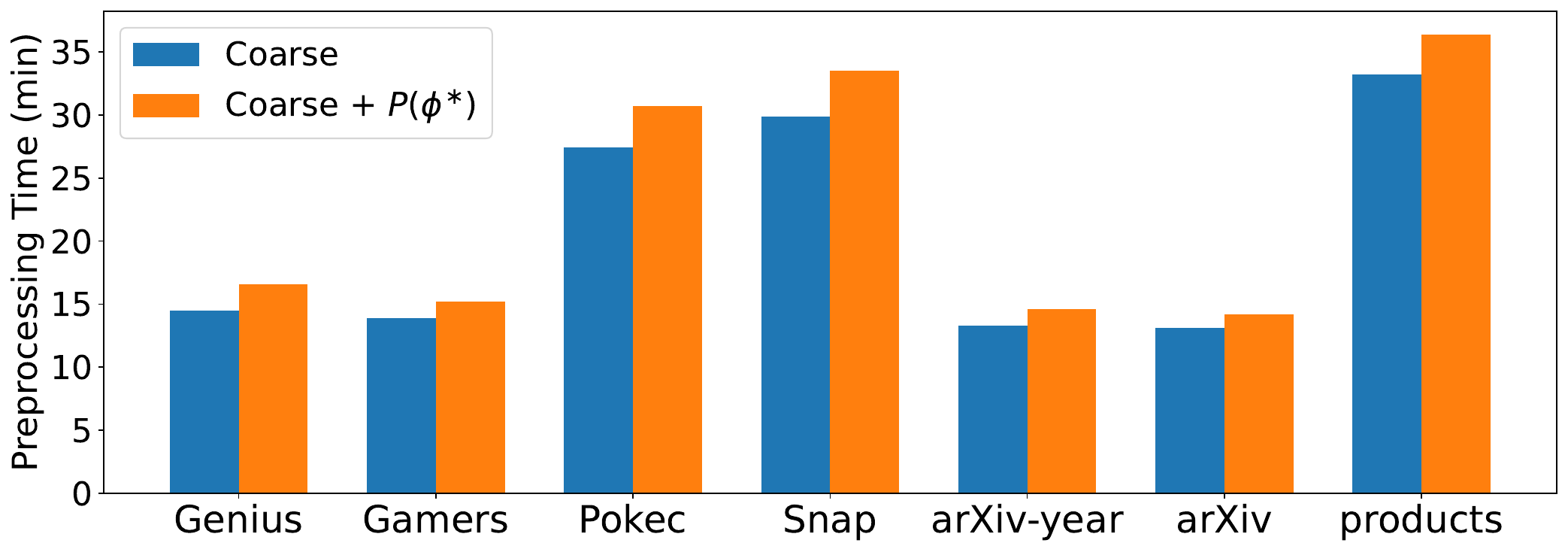}
    }
    \caption{Comparison of preprocessing time for SCAL, SGBGC, and their ACE-enhanced variants. \textbf{Complete results are provided in Figure~\ref{fig:preprocessing-full}.}}
    \label{fig:preprocessing}
\end{figure}

\subsection{RQ3: Effects of ACE on Efficiency and Resource Consumption}\label{sec:experiments-scalability}

\paragraph{Preprocessing Overhead.} Coarsening training pipelines require a preprocessing stage to construct coarsened graphs, typically incurring substantial cost (often on the order of $n\cdot E$). 
ACE adds one additional step—learning the refined projector $P(\phi^{\ast})$—but, as discussed in Section~\ref{sec:method-ACE-complexity}, this operation scales only with $n + E$ level and is therefore lightweight relative to the core coarsening procedure.

Figure~\ref{fig:preprocessing} (with full results in Figure~\ref{fig:preprocessing-full}) confirms this analysis: ACE introduces only a negligible increase in preprocessing time across all pipelines and datasets. 
This validates that the refined projector construction is highly efficient, allowing ACE to improve coarsened representations without altering the overall preprocessing cost profile.

\paragraph{Training Overhead.} We next assess the runtime and memory efficiency of ACE during model training, using the large-scale Pokec dataset and all six backbone GNNs. 
As shown in Figure~\ref{fig:time-space} (with additional results in Figures~\ref{fig:training-time-full} and \ref{fig:gpu-memory-full}), ACE adds only a small overhead: approximately \textbf{4\%–7\%} in both training time per epoch and GPU memory usage. 
Given that ACE delivers over 10\% accuracy improvement on challenging heterophilic graphs, this modest overhead represents a highly favorable trade-off and highlights the practicality of integrating ACE into existing coarsening-based pipelines.

\begin{figure}[!t]
  \centering
    \subfloat[SCAL training time per epoch on Pokec.]{
    \label{fig:training-time}
    \includegraphics[width=.99\linewidth]{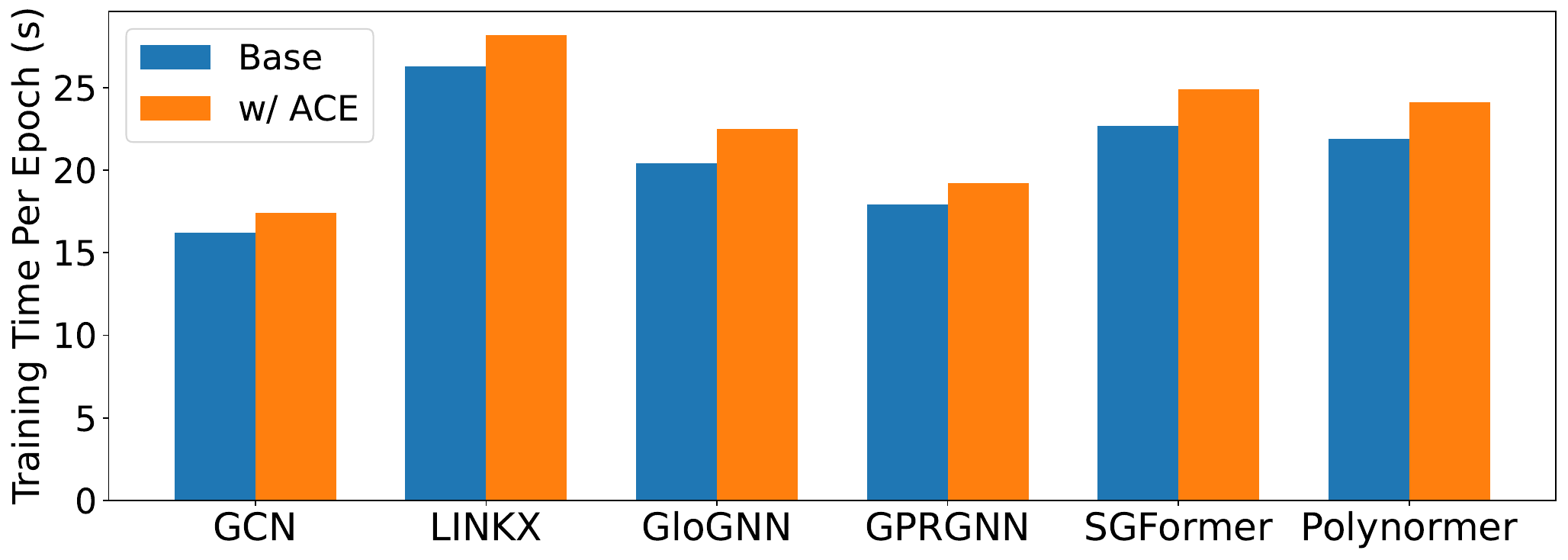}
    }\\
    \subfloat[SCAL training memory usage on Pokec.]{
    \label{fig:gpu-memory}
    \includegraphics[width=.99\linewidth]{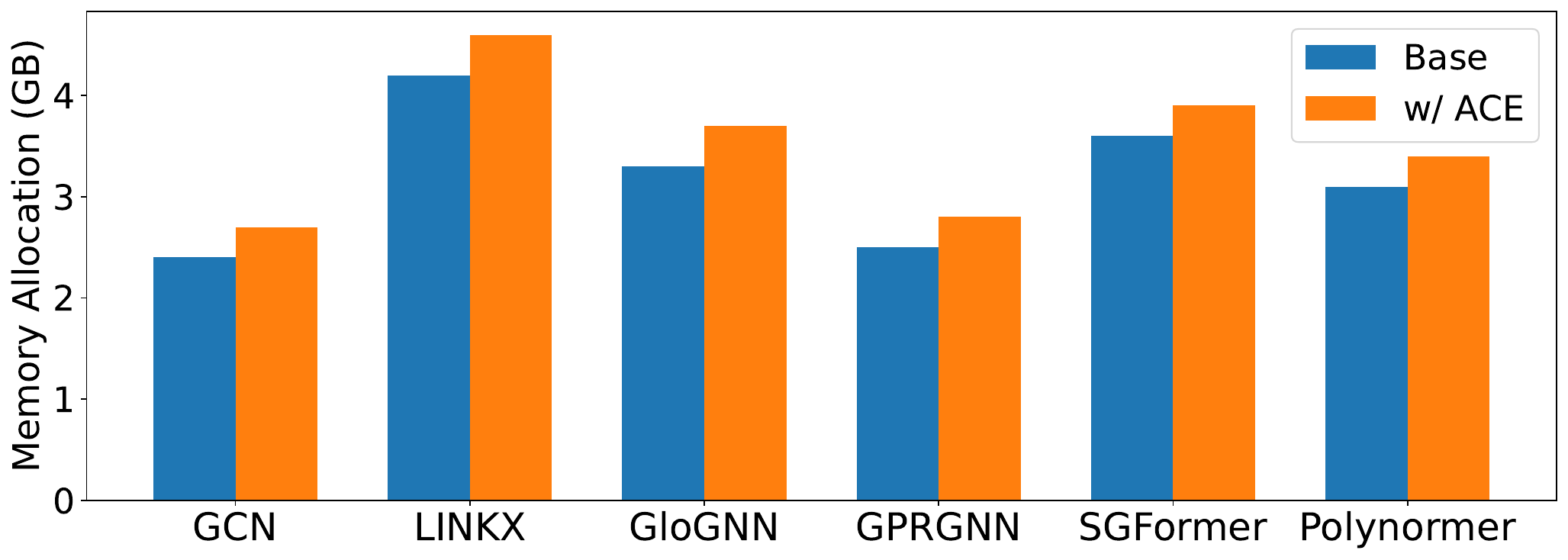}
    }
    \caption{Training memory usage and per-epoch runtime (on Pokec) of SCAL and SCAL+ACE across diverse backbone GNNs. \textbf{More results are provided in Tables~\ref{fig:training-time-full} and~\ref{fig:gpu-memory-full}.}}
    \label{fig:time-space}
\end{figure}


\begin{table*}[!th]
\centering
\caption{Comparison with state-of-the-art scalable GNN training methods based on graph data reduction. Results on a \textbf{GCN} backbone with a 10\% reduction ratio are reported in terms of test accuracy and total training time. \textbf{Complementary results are deferred to Table~\ref{table-againstSOTA-GPRGNN}.}}
\resizebox{\linewidth}{!}{
\begin{tabular}{cccc|cc|cc|cc}
\toprule
\multirow{2}{*}{Category}      & \multirow{2}{*}{Method} & \multicolumn{2}{c|}{Pokec} & \multicolumn{2}{c|}{Snap} & \multicolumn{2}{c|}{Gamers} & \multicolumn{2}{c}{Ogbn-product} \\ \cmidrule(lr){3-10} 
                               &                         & ACC       & Time (min)     & ACC       & Time (min)    & ACC        & Time (min)     & ACC          & Time (min)        \\ \midrule
\multirow{2}{*}{Coarsening}    & FGC                     & 53.81     & 24.73          & 23.06     & 31.66         & 40.14      & 12.39          & 71.28        & 32.71             \\
                               & UGC                     & 51.74     & 23.69          & 21.49     & 29.19         & 42.70      & 11.23          & 68.92        & 31.41             \\ \midrule
Data-Adaptive                  & GECC                    & 60.66     & 33.82          & 29.31     & 38.77         & 50.66      & 18.43          & 71.69        & 45.29             \\ \midrule
Subgraph-Sampling                        & AGS                     & 57.32     & 21.81          & 28.32     & 29.48         & 48.81      & 10.13          & 71.02        & 30.18             \\ \midrule
\multirow{2}{*}{\textbf{Ours}} & FGC+ACE                 & \textbf{61.67}     & 25.67          & \textbf{32.73}     & 32.92         & \textbf{52.52}      & 12.89          & \textbf{72.47}        & 33.65             \\
                               & UGC+ACE                 & 59.37     & 24.34          & \textbf{30.82}     & 31.25         & \textbf{53.83}      & 11.72          & 70.19        & 32.24             \\ \bottomrule
\end{tabular}}
\label{table-againstSOTA-GCN}
\end{table*}

\subsection{RQ4: Competitiveness of Conventional Methods with ACE}\label{sec:experiments-againstSOTA}

To assess the position of ACE among existing scalable and efficient GNN training paradigms, we compare ACE-enhanced conventional coarsening methods—specifically FGC and UGC—with representative state-of-the-art approaches beyond coarsening, including the sampling-based method AGS~\citep{AGS} and the graph condensation method GECC~\cite{GECC}. 
Results using the GCN backbone are presented here, while those with GPRGNN backbone are deferred to Appendix~\ref{app:expdetails-moreexp}.

As shown in Table~\ref{table-againstSOTA-GCN} (with additional results in Table~\ref{table-againstSOTA-GPRGNN}), the conventional coarsening methods FGC and UGC substantially underperform state-of-the-art approaches when used alone. 
However, after incorporating ACE, both methods exhibit significant performance gains, achieving results that are comparable to, and in several cases surpass, those of the state-of-the-art baselines. 
These findings suggest that ACE can substantially enhance the competitiveness of conventional coarsening-based training methods. 
Furthermore, ACE introduces only negligible computational overhead relative to the compared baselines, resulting in a more favorable performance-cost trade-off. 
Overall, the results highlight ACE as a practical and promising direction for revitalizing conventional coarsening-based training methods in the era of scalable graph learning.

\section{Conclusion}\label{sec:conclusion}

\paragraph{Paper Summary.} We study how graph heterophily affects GNN performance in coarsening-based GNN training. 
Guided by empirical observations and supported by our theoretical analysis, we interpret the degradation as that for heterophilic graphs coarsening discards more essential graph information than for homophilic graphs, which is an underexplored perspective in prior research. 
To address this issue, we introduced ACE, which first learns a heterophily-aware refined projector via Anisotropic Structural Regularization and incorporates it into an auxiliary loss; 
we then combine the designs and propose an improved optimization objective via Homoscedastic Uncertainty weighting, thus adaptively reintegrating discarded information to enhance coarsening-based training. 
Across diverse coarsening training pipelines, benchmarks, and backbone GNNs, ACE consistently achieves superior performance gains with minimal computational overhead, offering an efficient and easy-to-deploy enhancement for practical coarsening training.

\paragraph{Limitations and Future Work.} Although ACE substantially improves coarsening-based training, a non-negligible performance gap remains compared to full-graph training. 
This is primarily because ACE recovers discarded information implicitly via the refined projector, rather than modeling explicit intra-supernode structure. 
While this design preserves scalability and efficiency, it may limit the achievable performance. 
A promising direction for future work is to develop more expressive mechanisms for integrating discarded information, extending the ACE framework to further close this gap without sacrificing scalability.

\begin{acknowledgements} 

Yifan Chen acknowledges funding from 
National Natural Science Foundation of China (NSFC) under grant \texttt{62502408},
Research Grants Council (RGC) under grant \texttt{22303424}, GuangDong Basic and Applied Basic Research Foundation under grant \texttt{2025A1515010259},
and Guangdong and Hong Kong Universities ``1+1+1'' Joint Research Collaboration Scheme under grant \texttt{2025A050500}.
\end{acknowledgements}

\bibliography{uai_submission}


\clearpage
\appendix
\onecolumn

\section{Proof of Proposition~\ref{proposition:bound-error}}\label{app:proofofproposition:bound-error}

{\bf Proposition~\ref{proposition:bound-error}.} \textit{
Let the GNN trained on the original graph be $f(; \Theta^{\ast \ast})$ and the one trained on the coarsened graph be $f(; \Theta^{\ast})$, where
\begin{align}
\Theta^{\ast} &\triangleq \arg\min\limits_{\Theta} \mathcal{L}(f(A, X; \Theta), Y)\ , \\
\Theta^{\ast \ast} &\triangleq \arg\min\limits_{\Theta} \mathcal{L}(f(A^{\prime}, X^{\prime}; \Theta), Y^{\prime})\ .
\end{align}
When both models are evaluated on the original graph, the mutual information between their outputs satisfies
\begin{align}
I(f(A, X; \Theta^{\ast});f(A, X; \Theta^{\ast \ast})) \leq I(f(A, X; \Theta^{\ast}); Y) - \Omega\ ,
\end{align}
where $\Omega = I( g(\mathcal{G} \setminus \mathcal{G}^{\prime}) ; Y |  f(A, X; \Theta^{\ast \ast}) )$, and $g: \mathcal{G} \setminus \mathcal{G}^{\prime} \mapsto \mathcal{Y}$ is arbitrary neural network that maps $\mathcal{G} \setminus \mathcal{G}^{\prime}$ to the node label space $\mathcal{Y}$ and maximizes the conditional mutual information $I( g(\mathcal{G} \setminus \mathcal{G}^{\prime}) ; Y |  f(A, X; \Theta^{\ast \ast}) )$.
}

\begin{proof}
\label{proof:proposition:bound-error}
We first introduce two lemmas that will serve as the foundation for our subsequent analysis.

\begin{lemma}[(Theorem~1 ~\citep{Mutualinformation=Crossentropy-1} Restated)]
\label{lemma:Mutualinformation=Crossentropy}
In standard neural network training, the infimum of the expected cross-entropy loss with a softmax output is equivalent to the mutual information between input and output variables, up to an additive $\log c$ term (where $c$ is the number of classes), provided that all class probabilities are bounded away from zero~\citep{Mutualinformation=Crossentropy-1}.
\end{lemma}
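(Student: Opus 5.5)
The plan is to reduce the expected cross-entropy to a conditional entropy plus a nonnegative Kullback--Leibler term. The remaining entropy identity then produces the mutual information and the $\log c$ offset. Let $X$ denote the network input and $Y\in\{1,\dots,c\}$ the label, with joint law $p(x,y)$. A softmax network with parameters $\Theta$ defines a conditional distribution $q_\Theta(y\mid x)=\operatorname{softmax}(z_\Theta(x))_y$. The expected cross-entropy is $\mathbb{E}_{p}[-\log q_\Theta(Y\mid X)]$. Adding and subtracting $\log p(Y\mid X)$ gives the pointwise-in-$x$ decomposition
\begin{align}
\mathbb{E}_{p}\bigl[-\log q_\Theta(Y\mid X)\bigr] = H(Y\mid X) + \mathbb{E}_{X}\Bigl[\mathrm{KL}\bigl(p(\cdot\mid X)\,\|\,q_\Theta(\cdot\mid X)\bigr)\Bigr].
\end{align}
Since the KL term is nonnegative, every $\Theta$ satisfies $\mathbb{E}_p[-\log q_\Theta]\ge H(Y\mid X)$, which is the lower bound on the infimum.

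The second step shows this bound is approached, and this is where the hypothesis that class probabilities are bounded away from zero is used. If $p(y\mid x)\ge \varepsilon>0$ for all $y$ and $x$, the logits $z^\star(x)_y=\log p(y\mid x)$ are finite and bounded, since they lie in $[\log\varepsilon,0]$. Their softmax is exactly $p(\cdot\mid x)$, so the KL term vanishes at $z^\star$. The network family does not contain $z^\star$ exactly, so I would invoke universal approximation to pick $z_{\Theta_k}\to z^\star$ in $L^1(p_X)$ (or uniformly on compacts), clipping the logits to the bounded range if necessary. On that bounded range the map $z\mapsto \mathrm{KL}(p(\cdot\mid x)\,\|\,\operatorname{softmax}(z))$ is Lipschitz, and all quantities are bounded, so dominated convergence gives $\mathrm{KL}\to 0$ in expectation. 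Hence $\inf_\Theta \mathbb{E}_p[-\log q_\Theta(Y\mid X)]=H(Y\mid X)$.

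The final step rewrites the conditional entropy as $H(Y\mid X)=H(Y)-I(X;Y)$. Under the standard assumption of the cited theorem that the class prior is uniform, $H(Y)=\log c$, so the infimum equals $\log c - I(X;Y)$. This is the claimed equivalence with mutual information up to the additive $\log c$. For a non-uniform prior the same argument gives $\inf \mathrm{CE} = H(Y)-I(X;Y)\le \log c - I(X;Y)$, and I would state that this is the sense in which the lemma is used.

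I expect the main obstacle to be the approximation step: rigorously passing from the softmax family to the unconstrained optimum. That step needs the bounded-away-from-zero hypothesis both to keep $\log q$ integrable and to justify interchanging the limit with the expectation. I would handle it first by the bounded-logit clipping argument above.
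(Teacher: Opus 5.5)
The paper gives no proof of this lemma; it imports it as Theorem~1 of the cited work and uses it as a black box. Your route is correct and self-contained, and it differs from a bare citation. You decompose the expected cross-entropy as $H(Y\mid X)+\mathbb{E}_X[\mathrm{KL}(p(\cdot\mid X)\,\|\,q_\Theta(\cdot\mid X))]$, close the gap by universal approximation, and then use $H(Y\mid X)=H(Y)-I(X;Y)$.

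What this buys over the citation is that it shows what the lemma actually needs.
\begin{itemize}
\item The $\log c$ offset is exactly $H(Y)$, so a uniform class prior is indispensable, and the stated hypothesis does not supply it. With $X$ independent of $Y$ and a non-uniform prior, the infimum is $H(Y)<\log c=\log c-I(X;Y)$. You were right to add the balanced-class assumption explicitly and to record $H(Y)-I(X;Y)\le\log c-I(X;Y)$ in the general case.
\item The infimum must range over a universal-approximating family with unbounded width, not a fixed architecture.
\end{itemize}

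In the other direction, the bounded-away-from-zero hypothesis is stronger than you need.
\begin{itemize}
\item The gradient $\nabla_z\mathrm{KL}(p\,\|\,\mathrm{softmax}(z))=\mathrm{softmax}(z)-p$ has $\ell_1$-norm at most $2$. So this map is $2$-Lipschitz in $\|\cdot\|_\infty$ on all of $\mathbb{R}^c$, and $L^1(p_X)$ convergence of the logits already forces the expected KL to zero. Neither clipping nor dominated convergence is needed.
\item For arbitrary $p$, the truncated logits $\max(\log p(y\mid x),-M)$ give $\mathrm{KL}\le\log(1+c\,e^{-M})$ uniformly in $x$. Hence $\inf\mathrm{CE}=H(Y\mid X)$ holds with no positivity assumption at all.
\end{itemize}

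Finally, your derivation shows that the infimum is the $\Theta$-independent constant $\log c-I(X;Y)$, a quantity about the \emph{input} $X$. By itself this does not identify the cross-entropy minimizer with a maximizer of $I(f_\Theta(X);Y)$, yet that is how the lemma is invoked in the proof of Proposition~\ref{proposition:bound-error}. What your argument does yield, by applying Gibbs' inequality conditionally on the model output, is only the one-sided bound $\mathrm{CE}(\Theta)\ge H(Y)-I(f_\Theta(X);Y)$.
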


\begin{lemma}
\label{lemma:MI-minimum}
Let $Y$ be the ground-truth label, and let $A$ and $B$ be the outputs of two
different models trained to predict $Y$. Assume that $A$ and $B$ are 
conditionally independent given $Y$, i.e.\ $A \perp B \mid Y$. Then
\[
I(A;B) \;\le\; \min\{\, I(A;Y),\; I(B;Y) \,\}.
\]
\end{lemma}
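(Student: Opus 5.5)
The plan is to read the conditional-independence hypothesis $A \perp B \mid Y$ as saying that $A - Y - B$ is a Markov chain, and then apply the data processing inequality once for each term of the minimum. The fact that $A$ and $B$ are outputs of trained models plays no role in the argument. Only the joint distribution of $(A,B,Y)$ and the conditional independence are used.

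\textbf{Bound by $I(A;Y)$.} I will expand $I(A;B,Y)$ with the chain rule for mutual information in two orders:
\begin{align}
I(A;B,Y) &= I(A;Y) + I(A;B \mid Y)\ , \\
I(A;B,Y) &= I(A;B) + I(A;Y \mid B)\ .
\end{align}
Since $A \perp B \mid Y$, the term $I(A;B\mid Y)$ vanishes, so the first identity gives $I(A;B,Y) = I(A;Y)$. Conditional mutual information is nonnegative, so $I(A;Y\mid B) \ge 0$, and the second identity gives $I(A;B,Y) \ge I(A;B)$. Combining the two yields $I(A;B) \le I(A;Y)$.

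\textbf{Bound by $I(B;Y)$.} The hypothesis is symmetric in $A$ and $B$, and so is $I(A;B)$. Repeating the same computation with the roles of $A$ and $B$ swapped, i.e.\ expanding $I(B;A,Y)$ in both orders, gives $I(A;B) \le I(B;Y)$. Taking the smaller of the two bounds proves $I(A;B) \le \min\{I(A;Y), I(B;Y)\}$.

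\textbf{Main obstacle.} The algebra is routine, so the only delicate point is rigor. The chain rule in this form, and the subtraction it implies, requires the mutual informations to be finite. Model outputs may be continuous (softmax vectors), so I would state the lemma for random variables with $I(A;Y), I(B;Y) < \infty$. If needed, I would justify the general case through the standard definition of mutual information as a supremum over finite quantizations, under which the data processing inequality still holds. If both quantities are infinite, the bound is trivial.
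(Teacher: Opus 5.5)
Your proof is correct and follows the same route as the paper. The paper reads $A \perp B \mid Y$ as the Markov chains $A \to Y \to B$ and $B \to Y \to A$ and applies the data processing inequality once for each bound; your two chain-rule expansions of $I(A;B,Y)$ and $I(B;A,Y)$ are just the standard proof of that inequality written out, and your finiteness caveat is harmless but unnecessary, since $I(A;Y) = I(A;B,Y) \ge I(A;B)$ holds in $[0,\infty]$ without any subtraction.
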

\begin{proof}
\label{proof:lemma:MI-minimum}
The conditional independence $A \perp B \mid Y$ implies the Markov chains
\[
A \to Y \to B 
\qquad\text{and}\qquad
B \to Y \to A .
\]
By the data-processing inequality applied to the first Markov chain, any
information shared between $A$ and $B$ must flow through $Y$, which yields
\[
I(A;B) \le I(A;Y).
\]
Applying data processing to the second chain analogously gives
\[
I(A;B) \le I(B;Y).
\]
Taking the minimum of the two bounds establishes
\[
I(A;B) \le \min\{ I(A;Y),\; I(B;Y) \}.
\]
Intuitively, since $A$ and $B$ can only correlate through the common target $Y$,
their mutual information cannot exceed the amount of information that each 
individually retains about $Y$. The weaker predictor therefore determines the
maximum possible dependence between them.
\end{proof}

Now we begin the proof. 
By Lemma~\ref{lemma:Mutualinformation=Crossentropy}, the optimization objectives in Eq.~\eqref{eq:theta-gnn-params} can be equivalently expressed in terms of mutual-information maximization. 
In particular,
\begin{align}
\label{eq:theta-gnn-params-eq}
\Theta^{\ast} &\triangleq \arg\max\limits_{\Theta} I(f(A, X; \Theta); Y)\ , \\
\Theta^{\ast \ast} &\triangleq \arg\max\limits_{\Theta} I(f(A^{\prime}, X^{\prime}; \Theta); Y^{\prime})\ ,
\end{align}
thus recasting the original loss-minimization problems into the perspective of mutual-information maximization.

Next, recall the chain-rule of mutual information~\citep{elementsofIT}:
\begin{equation}
\label{eq:proof:proposition:bound-error-1}
I(U , V ; W) - I(U ; W) = I(V ; W | U)
\end{equation}
Let $U = f(A, X ; \Theta^{\ast \ast})$, $V = g(\mathcal{G} \setminus \mathcal{G}^{\prime})$, $W = Y$, we obtain:
\begin{align}
\label{eq:proof:proposition:bound-error-2}
&\; I \Bigl( f(A, X ; \Theta^{\ast \ast}) , g(\mathcal{G} \setminus \mathcal{G}^{\prime}) ; Y \Bigr) - I \Bigl( f(A, X ; \Theta^{\ast \ast}) ; Y \Bigr) \notag \\ 
= &\; I\Bigl(g(\mathcal{G} \setminus \mathcal{G}^{\prime}); Y \big| f(A, X ; \Theta^{\ast \ast}) \Bigr)\ .
\end{align}
We focus on the term $I \Bigl( f(A, X ; \Theta^{\ast \ast}) , g(\mathcal{G} \setminus \mathcal{G}^{\prime}) ; Y \Bigr)$, which quantifies the relationship between two models trained on different partitions of the graph $\mathcal{G}$—namely, $\mathcal{G}^{\prime}$ and $\mathcal{G} \setminus \mathcal{G}^{\prime}$—and the labels of the entire graph, $Y$. 
Specifically, we leverage the data processing inequality~\citep{elementsofIT}:
\begin{equation}
\label{eq:proof:proposition:bound-error-3}
I(U ; V) \geq I(U ; W),\, \text{with Markov chain:}\,\, U \rightarrow V \rightarrow W\ .
\end{equation}
Notice that $f(; \Theta^{\ast})$, $f(; \Theta^{\ast \ast})$, and $g$ are all deterministic functions. 
Hence, by letting $U = Y$, $V = f(A, X; \Theta^{\ast})$, and $W = \Bigl( f(A, X ; \Theta^{\ast \ast}) , g(\mathcal{G} \setminus \mathcal{G}^{\prime}) \Bigr)$, we can derive the following inequality:
\begin{equation}
\label{eq:proof:proposition:bound-error-4}
I \Bigl( f(A, X; \Theta^{\ast}) ; Y \Bigr) \geq I \Bigl( f(A, X ; \Theta^{\ast \ast}) , g(\mathcal{G} \setminus \mathcal{G}^{\prime}) ; Y \Bigr)\ .
\end{equation}
The inequality encapsulates the intuitive principle that the aggregate contribution from the model trained on the coarsened graph, $f(A, X ; \Theta^{\ast \ast})$, and the additional term representing the information excluded in the coarsened graph, $g(\mathcal{G} \setminus \mathcal{G}^{\prime})$, is bounded above by the output of the model trained directly on the full graph, $f(A, X; \Theta^{\ast})$.

By combining Equation~\ref{eq:proof:proposition:bound-error-2} and Equation~\ref{eq:proof:proposition:bound-error-4}, and observing that $I \Bigl( f(A, X; \Theta^{\ast}); Y \Bigr) \geq I \Bigl( f(A, X; \Theta^{\ast \ast}); Y \Bigr)$, the following inequality is derived:
\begin{align}
\label{eq:proof:proposition:bound-error-5}
\Delta & = \Bigl| I\Bigl( f(A, X; \Theta^{\ast}); Y \Bigr) - I \Bigl( f(A, X; \Theta^{\ast \ast}); Y \Bigr) \Bigr| \notag \\
& \geq I \Bigl( g(\mathcal{G} \setminus \mathcal{G}^{\prime}) ; Y |  f(A, X; \Theta^{\ast \ast}) \Bigr)\ .
\end{align}

Finally, applying Lemma~\ref{lemma:MI-minimum} with $A := f(A, X; \Theta^{\ast})$ and $B := f(A, X; \Theta^{\ast \ast})$, and using the fact that
\begin{align}
I\Bigl( f(A, X; \Theta^{\ast}); Y \Bigr) \geq I \Bigl( f(A, X; \Theta^{\ast \ast}); Y \Bigr)\ ,
\end{align}
we obtain
\begin{align}
I(f(A, X; \Theta^{\ast});f(A, X; \Theta^{\ast \ast})) 
&\leq I \Bigl( f(A, X; \Theta^{\ast \ast}); Y \Bigr) \\
&\leq I\Bigl( f(A, X; \Theta^{\ast}); Y \Bigr) - I \Bigl( g(\mathcal{G} \setminus \mathcal{G}^{\prime}) ; Y |  f(A, X; \Theta^{\ast \ast}) \Bigr) \\
&= I\Bigl( f(A, X; \Theta^{\ast}); Y \Bigr) - \Omega\ ,
\end{align}
which establishes the desired inequality and completes the proof.
\end{proof}


\section{Proof of Proposition~\ref{proposition:ASR-heterophily}}\label{app:proofofproposition:ASR-heterophily}

{\bf Proposition~\ref{proposition:ASR-heterophily}.} \textit{
Let $P(\phi^{\ast})$ denote the projector obtained by solving $\min_{\phi}\mathcal{J}_{\text{ASR}}(\phi)$. The resulting $P(\phi^{\ast})$ acts as a conditional spectral filter that interpolates between two distinct regimes based on local homophily:
\begin{itemize}[leftmargin=*,parsep=1pt,itemsep=1pt,topsep=1pt]
    \item \textbf{Heterophilic Regime:} In regions where $\|X_i - X_j\| \to \infty$, $P(\phi^{\ast})$ approaches the solution of identity reconstruction, prioritizing high-frequency signal fidelity over structural smoothness.
    \item \textbf{Homophilic Regime:} In regions where $\|X_i - X_j\| \to 0$, $P(\phi^{\ast})$ approaches the solution of isotropic laplacian smoothing, enforcing local smoothness.
\end{itemize}
}

\begin{proof}
\label{proof:proposition:ASR-heterophily}
We analyze the local optimality condition for a specific node $i$ by isolating the terms in $\mathcal{L}(P)$ dependent on $(P(\phi)\bm{\mu})_i$. 
The local objective is given by:
\begin{equation}
    \mathcal{L}_i((P(\phi)\bm{\mu})_i) = \|(P(\phi)\bm{\mu})_i - X_i\|^2 + \lambda \sum_{j \in \mathcal{N}(i)} \exp(-\|X_i-X_j\|^2) \cdot \|(P(\phi)\bm{\mu})_i - (P(\phi)\bm{\mu})_j\|^2.
\end{equation}
The necessary condition for optimality requires the gradient $\nabla_{(P(\phi)\bm{\mu})_i} \mathcal{L}_i$ to vanish. Differentiating with respect to $(P(\phi)\bm{\mu})_i$ yields:
\begin{equation}
    2((P(\phi)\bm{\mu})_i - X_i) + 2\lambda \sum_{j \in \mathcal{N}(i)} \exp(-\|X_i-X_j\|^2) ((P(\phi)\bm{\mu})_i - (P(\phi)\bm{\mu})_j) = 0.
\end{equation}
Rearranging terms to isolate $(P(\phi)\bm{\mu})_i$, we obtain the closed-form update rule for the optimal lifted signal:
\begin{equation} \label{eq:update_rule}
    (P(\phi^{\ast})\bm{\mu})_i \left( 1 + \lambda \sum_{j \in \mathcal{N}(i)} \exp(-\|X_i-X_j\|^2) \right) = X_i + \lambda \sum_{j \in \mathcal{N}(i)} \exp(-\|X_i-X_j\|^2) (P(\phi)\bm{\mu})_j.
\end{equation}
This leads to the expression:
\begin{equation}
    (P(\phi^{\ast})\bm{\mu})_i = \frac{X_i + \lambda \sum_{j \in \mathcal{N}(i)} \exp(-\|X_i-X_j\|^2) (P(\phi)\bm{\mu})_j}{1 + \lambda \sum_{j \in \mathcal{N}(i)} \exp(-\|X_i-X_j\|^2)}.
\end{equation}
We now examine the asymptotic behavior of the weight coefficient $\exp(-\|X_i-X_j\|^2) \in (0, 1]$.

\textbf{Case 1 (Homophily):} Consider a neighborhood where features are locally smooth, such that $X_i \approx X_j$. Here, $\|X_i - X_j\| \to 0$, implying $\exp(-\|X_i-X_j\|^2) \to 1$. The update rule in Eq. \eqref{eq:update_rule} converges to:
\begin{equation}
    (P(\phi^{\ast})\bm{\mu})_i \approx \frac{X_i + \lambda \sum_{j \in \mathcal{N}(i)} (P(\phi)\bm{\mu})_j}{1 + \lambda d_i},
\end{equation}
where $d_i$ is the degree of node $i$. This formulation is equivalent to the first-order approximation of the graph diffusion process (Low-Pass Filter), where the node representation is smoothed by averaging over its neighbors.

\textbf{Case 2 (Heterophily):} Consider a neighborhood exhibiting high feature variance (e.g., class boundaries), such that $\|X_i - X_j\|$ is large. Here, $\exp(-\|X_i-X_j\|^2) \to 0$. The regularization term vanishes, yielding:
\begin{equation}
    (P(\phi^{\ast})\bm{\mu})_i \approx \frac{X_i + 0}{1 + 0} = X_i.
\end{equation}
In this regime, the structural constraint is relaxed. The projector effectively decouples node $i$ from its dissimilar neighbors, allowing the reconstructed signal to retain the high-frequency information residual essential for distinguishing heterophilic boundaries.

Since $\exp(-\|X_i-X_j\|^2)$ varies continuously, $P(\phi^{\ast})$ adaptively interpolates between these two extrema based on the local feature manifold.
\end{proof}


\section{Toy Experiment Results (Full)}\label{app:toyexp-full}

We provide the complete results of the toy experiments introduced in Section~\ref{sec:method-motivation}, complementing the summary reported in Table~\ref{table-toyexp}. 
The full results are shown in Table~\ref{table-toyexp-full}. 
As observed, on heterophilic graphs, adding the simple auxiliary loss yields substantial performance gains, with an average improvement exceeding \textbf{6 points} (see the ``$\Delta\uparrow$ (Heterophily)'' column). 
These results further strengthen the empirical motivation behind our method.

Interestingly, although the auxiliary loss is designed to address heterophily, we also observe minor improvements on homophilic datasets. 
This is consistent with the fact that many benchmarks labeled as ``homophilic''—such as those from the OGB suite—actually contain mixed patterns with non-negligible local heterophily~\citep{heterophily-gnn-survey,heterophily-gnn-survey-2,heterophily-gnn-survey-3}. 
Such latent heterophily allows the same enhancement to deliver modest benefits even on nominally homophilic graphs.

\begin{table*}[!th]
\caption{Full toy experiment results. We highlight substantial improvements—observed exclusively on heterophilic datasets—in \textbf{bold}. ``$\Delta\uparrow$ (Heterophily)'' reports the average gain on heterophilic graphs.}
\label{table-toyexp-full}
\centering
\setlength{\tabcolsep}{6pt}
\resizebox{\textwidth}{!}{
\begin{tabular}{ccccc|ccc}
\toprule
\multirow{2}{*}{Backbone} & \multirow{2}{*}{Method}        & \multicolumn{3}{c|}{Heterophilic Graphs}                                 & \multicolumn{2}{c}{Homophilic Graphs} & \multirow{2}{*}{$\Delta\uparrow$ (Heterophily)} \\ \cmidrule(lr){3-7}
                          &                                & Genius                 & Gamers                 & Pokec                  & Arxiv             & Products          &                                                 \\ \midrule
\multirow{7}{*}{GCN}      & Full Graph                     & 87.42                  & 62.18                  & 75.45                  & 71.74             & 75.64             & /                                               \\ \cmidrule(lr){2-8} 
                          & SCAL                           & 67.47                  & 47.39                  & 50.44                  & 63.42             & 68.37             & \multirow{2}{*}{\textbf{4.72}}                  \\
                          & SCAL w/ $\tilde{\mathcal{L}}$  & 71.32 (\textbf{+3.85}) & 52.86 (\textbf{+5.47}) & 55.29 (\textbf{+4.85}) & 64.71 (+1.29)     & 69.11 (+0.74)     &                                                 \\ \cmidrule(lr){2-8} 
                          & FGC                            & 70.13                  & 40.14                  & 53.81                  & 65.52             & 71.28             & \multirow{2}{*}{\textbf{6.02}}                  \\
                          & FGC w/ $\tilde{\mathcal{L}}$   & 75.67 (\textbf{+5.54}) & 49.26 (\textbf{+9.12}) & 57.22 (\textbf{+3.41}) & 66.18 (+0.66)     & 72.06 (+0.78)     &                                                 \\ \cmidrule(lr){2-8} 
                          & SGBGC                          & 71.22                  & 44.62                  & 55.48                  & 64.12             & 71.96             & \multirow{2}{*}{\textbf{6.00}}                  \\
                          & SGBGC w/ $\tilde{\mathcal{L}}$ & 73.81 (\textbf{+2.59}) & 53.09 (\textbf{+8.47}) & 62.42 (\textbf{+6.94}) & 65.72 (+1.60)     & 72.37 (+0.41)     &                                                 \\ \midrule
\multirow{7}{*}{LINKX}    & Full Graph                     & 90.77                  & 66.06                  & 82.04                  & 69.54             & 74.59             & /                                               \\ \cmidrule(lr){2-8} 
                          & SCAL                           & 60.62                  & 50.13                  & 56.48                  & 57.71             & 62.49             & \multirow{2}{*}{\textbf{7.14}}                  \\
                          & SCAL w/ $\tilde{\mathcal{L}}$  & 70.44 (\textbf{+9.82}) & 54.28 (\textbf{+4.15}) & 63.93 (\textbf{+7.45}) & 59.38 (+1.67)     & 63.61 (+1.12)     &                                                 \\ \cmidrule(lr){2-8} 
                          & FGC                            & 65.49                  & 42.48                  & 57.63                  & 59.38             & 67.52             & \multirow{2}{*}{\textbf{6.91}}                  \\
                          & FGC w/ $\tilde{\mathcal{L}}$   & 71.92 (\textbf{+6.43}) & 48.62 (\textbf{+6.14}) & 65.81 (\textbf{+8.18}) & 61.44 (+2.06)     & 68.77 (+1.25)     &                                                 \\ \cmidrule(lr){2-8} 
                          & SGBGC                          & 62.68                  & 47.60                  & 54.18                  & 61.55             & 64.93             & \multirow{2}{*}{\textbf{6.09}}                  \\
                          & SGBGC w/ $\tilde{\mathcal{L}}$ & 68.13 (\textbf{+5.45}) & 52.83 (\textbf{+5.23}) & 61.77 (\textbf{+7.59}) & 63.24 (+1.69)     & 66.19 (+1.26)     &                                                 \\ \midrule
\multirow{7}{*}{GloGNN}   & Full Graph                     & 90.66                  & 66.19                  & 83.00                  & 72.68             & 77.48             & /                                               \\ \cmidrule(lr){2-8} 
                          & SCAL                           & 59.47                  & 46.28                  & 61.58                  & 57.09             & 66.53             & \multirow{2}{*}{\textbf{6.97}}                  \\
                          & SCAL w/ $\tilde{\mathcal{L}}$  & 67.41 (\textbf{+7.94}) & 53.11 (\textbf{+6.83}) & 67.74 (\textbf{+6.16}) & 58.32 (+1.23)     & 67.44 (+0.91)     &                                                 \\ \cmidrule(lr){2-8} 
                          & FGC                            & 63.81                  & 45.92                  & 66.91                  & 61.73             & 70.46             & \multirow{2}{*}{\textbf{6.13}}                  \\
                          & FGC w/ $\tilde{\mathcal{L}}$   & 72.33 (\textbf{+8.52}) & 51.26 (\textbf{+5.34}) & 71.44 (\textbf{+4.53}) & 63.08 (+1.35)     & 71.52 (+1.06)     &                                                 \\ \cmidrule(lr){2-8} 
                          & SGBGC                          & 67.33                  & 47.15                  & 63.77                  & 61.24             & 69.11             & \multirow{2}{*}{\textbf{6.23}}                  \\
                          & SGBGC w/ $\tilde{\mathcal{L}}$ & 73.26 (\textbf{+5.93}) & 54.31 (\textbf{+7.17}) & 69.36 (\textbf{+5.59}) & 62.75 (+1.51)     & 70.07 (+0.96)     &                                                 \\ \bottomrule
\end{tabular}}
\end{table*}


\section{Experimental Details}\label{app:expdetails}
This section provides all supplementary experimental details, including dataset statistics (Appendix~\ref{app:expdetails-dataset}), descriptions of coarsening training baselines (Appendix~\ref{app:expdetails-coarsetrain}), backbone GNN introductions (Appendix~\ref{app:expdetails-backboneGNN}), and additional experimental results (Appendix~\ref{app:expdetails-moreexp}).


\subsection{Dataset Statistics}\label{app:expdetails-dataset}
All dataset statistics are summarized in Table~\ref{table-datasets-statistics}. 
To characterize homophily levels, we report the \textit{edge homophily} metric~\citep{H2GCN}, where a smaller value of $\mathcal{H}$ indicates stronger heterophily.

\begin{table*}[!t]
  \caption{Statistics for the large graph datasets. 
  $\mathcal{H}$ quantifies the homophily level~\citep{H2GCN}, where lower values denote stronger heterophily, and higher values indicate more homophilic graphs.}
  \label{table-datasets-statistics}
  \centering
    \begin{tabular}{cccccc|cc}
    \toprule
    \multirow{2}{*}{Statistics} & \multicolumn{5}{c|}{Heterophilic Graphs}                   & \multicolumn{2}{c}{Homophilic Graphs} \\ \cmidrule(lr){2-8} 
                                & Genius  & Gamers    & Pokec      & Snap       & arXiv-year & ogbn-arXiv       & ogbn-products      \\ \midrule
    \# Nodes                    & 421,961 & 168,114   & 1,632,803  & 2,923,922  & 169,343    & 169,343          & 2,449,029          \\
    \# Edges                    & 984,979 & 6,797,557 & 30,622,564 & 13,975,788 & 1,166,243  & 1,157,799        & 61,859,140         \\
    \# Features                 & 12      & 7         & 65         & 269        & 128        & 128              & 100                \\
    \# Classes                  & 2       & 2         & 2          & 5          & 5          & 40               & 47                 \\
    \# $\mathcal{H}$            & 0.62    & 0.45      & 0.44       & 0.07       & 0.22       & 0.66             & 0.82               \\ \bottomrule
    \end{tabular}
\end{table*}


\subsection{Overview of Coarsening Training Baselines}\label{app:expdetails-coarsetrain}
We provide an overview of the four advanced coarsening-based GNN training pipelines used in our experiments, summarized as follows:

\begin{itemize}[leftmargin=15pt,parsep=2pt,itemsep=2pt,topsep=2pt]
    \item \textbf{SCAL~\citep{GC-scal}.} SCAL introduces a spectral graph coarsening technique~\citep{Gcoarse-2} that operates in an unsupervised manner, relying solely on the inherent graph structure. 
    The coarsened graph is constructed during preprocessing, and GNNs are subsequently trained on these reduced graphs. 
    A novel coarsened graph convolution is proposed to maintain consistency between operations on the original and coarsened graphs. Optimization is performed exclusively with respect to the loss computed on the coarsened graph labels. 
    As the first work to explore GNN training on coarsened graphs, SCAL has inspired numerous follow-up studies aimed at improving computational efficiency and scalability.
    \item \textbf{FGC~\citep{Gcoarse-7-FGC}.} FGC enhances graph coarsening by incorporating both structural and feature information. 
    It formulates the coarsening process as the minimization of a Dirichlet energy functional, which unifies graph topology and node attributes. 
    The resulting coarsened graphs are more informative and support superior downstream GNN training.
    \item \textbf{UGC~\citep{UGC}.} UGC employs carefully designed hash functions for efficient graph coarsening on attributed graphs. 
    In addition, it explicitly accounts for the heterophily present in such graphs. 
    This combination allows for the generation of coarsened graphs that outperform those produced by FGC in terms of training effectiveness.
    \item \textbf{SGBGC~\citep{SGBGC}.} SGBGC constructs coarsened graphs via an iterative granulation process. 
    It partitions the original graph into granular-balls based on a node purity criterion, with each granular-ball treated as a supernode. 
    This approach achieves substantial graph compression while preserving structural integrity, offering notable gains in both efficiency and scalability.
\end{itemize}


\subsection{Backbone GNN Introductions}\label{app:expdetails-backboneGNN}
We briefly summarize the six backbone GNNs used in our experiments:

\begin{itemize}[leftmargin=15pt,parsep=2pt,itemsep=2pt,topsep=2pt]
    \item \textbf{GCN~\citep{GCN}.} GCN approximates spectral graph convolution via a first-order polynomial, aggregating normalized neighbor features in each layer. Its simplicity and efficiency make it a strong baseline, though its inherent smoothness bias limits performance on heterophilic graphs.
    \item \textbf{LINKX~\citep{dataset6-large-hetero}.} LINKX separately embeds node features and adjacency structure and fuses them using MLPs, avoiding graph convolution altogether. This decoupled design removes the homophily bias and scales well to large, low-homophily graphs where many GNNs struggle.
    \item \textbf{GloGNN~\citep{glognn++}.} GloGNN aggregates information using a learned global correlation matrix rather than local neighborhoods, enabling flexible (including signed) information mixing. A closed-form update with Woodbury acceleration allows linear-time scaling, making GloGNN highly effective on heterophilic graphs.
    \item \textbf{GPRGNN~\citep{GPRGNN}.} GPRGNN learns the coefficients of a generalized PageRank propagation, enabling adaptive multi-hop filtering. By learning propagation weights instead of fixing them, it naturally adjusts to both homophilic and heterophilic patterns and mitigates over-smoothing.
    \item \textbf{SGFormer~\citep{SGFormer}.} SGFormer employs a single-layer global attention paired with a lightweight GNN, capturing all-pair interactions in one step with linear complexity. It is extremely efficient but lacks the richer positional and structural encodings common in deeper graph Transformers.
    \item \textbf{Polynormer~\citep{Polynormer}.} Polynormer is a kernel-based graph Transformer that computes attention via graph kernels derived from sampled neighborhoods. Its polynomial-expressive architecture learns high-degree equivariant polynomials through a linear local-to-global attention mechanism.
\end{itemize}


\subsection{Supplementary Experimental Results}\label{app:expdetails-moreexp}
This section provides additional experimental results that were moved from the main text due to space constraints. 
Specifically, we include:
\begin{itemize}[leftmargin=15pt,parsep=2pt,itemsep=2pt,topsep=2pt]
    \item {\bf Table~\ref{table-mainexp-1-std}:} Standard deviations corresponding to the mean results in Table~\ref{table-mainexp-1}.
    \item {\bf Table~\ref{table-mainexp-2}:} ACE-integrated results for the remaining backbone GNNs: GPRGNN, SGFormer, and Polynormer.
    \item {\bf Table~\ref{table-convmatch-gcond}:} Results on \textbf{task-optimized} coarsening training pipelines—ConvMatch and GCond—using GCN and GPRGNN across all benchmarks.
    \item {\bf Table~\ref{table-0.01coarse}:} Results under a 1\% coarsening ratio, using GCN and GPRGNN as backbones.
    \item {\bf Table~\ref{table-againstSOTA-GPRGNN}:} Comparison between ACE-enhanced coarsening methods and state-of-the-art approaches under GPRGNN backbone.
    \item {\bf Table~\ref{table-ablation-full-GCN} and Table~\ref{table-ablation-full-GPRGNN}:} Comprehensive ablation study of ACE components under the GCN and GPRGNN backbones.
    \item {\bf Figure~\ref{fig:beta-ablation-full}:} Complete ablation results for the tradeoff coefficient $\beta$.
    \item {\bf Figure~\ref{fig:preprocessing-full}:} Full preprocessing-time comparison between vanilla and ACE-enhanced coarsening training pipelines.
    \item {\bf Figure~\ref{fig:training-time-full}:} Full per-epoch training-time comparison across coarsening training pipelines.
    \item {\bf Figure~\ref{fig:gpu-memory-full}:} Full training memory consumption comparison across coarsening training pipelines.
    \item {\bf Figure~\ref{fig:noiserobust}:} Assessment of feature noise robustness for ACE and state-of-the-art methods.
\end{itemize}
\begin{table*}[!t]
\caption{Standard deviations corresponding to the mean results reported in Table~\ref{table-mainexp-1}.}
\label{table-mainexp-1-std}
\centering
\setlength{\tabcolsep}{5pt}
\resizebox{.8\textwidth}{!}{
\begin{tabular}{ccccccc|cc}
\toprule
\multirow{2}{*}{Backbone} & \multirow{2}{*}{Method} & \multicolumn{5}{c|}{Heterophilic Graphs}    & \multicolumn{2}{c}{Homophilic Graphs} \\ \cmidrule(lr){3-9} 
                          &                         & Genius & Gamers & Pokec & Snap & arXiv-year & ogbn-arXiv       & ogbn-products      \\ \midrule
\multirow{9}{*}{GCN}      & -                       & 0.34   & 0.29   & 0.21  & 0.22 & 0.38       & 0.23             & 0.26               \\ \cmidrule(lr){2-9} 
                          & SCAL                    & 0.18   & 0.35   & 0.23  & 0.26 & 0.26       & 0.23             & 0.23               \\
                          & + ACE                   & 0.21   & 0.29   & 0.21  & 0.27 & 0.22       & 0.22             & 0.38               \\ \cmidrule(lr){2-9} 
                          & FGC                     & 0.18   & 0.23   & 0.25  & 0.23 & 0.25       & 0.21             & 0.18               \\
                          & + ACE                   & 0.24   & 0.18   & 0.31  & 0.21 & 0.27       & 0.38             & 0.22               \\ \cmidrule(lr){2-9} 
                          & UGC                     & 0.27   & 0.26   & 0.24  & 0.24 & 0.23       & 0.22             & 0.20               \\
                          & + ACE                   & 0.31   & 0.24   & 0.21  & 0.26 & 0.22       & 0.21             & 0.23               \\ \cmidrule(lr){2-9} 
                          & SGBGC                   & 0.18   & 0.20   & 0.25  & 0.19 & 0.23       & 0.20             & 0.23               \\
                          & + ACE                   & 0.22   & 0.26   & 0.18  & 0.23 & 0.25       & 0.23             & 0.29               \\ \midrule
\multirow{9}{*}{LINKX}    & -                       & 0.33   & 0.19   & 0.26  & 0.29 & 0.32       & 0.21             & 0.19               \\ \cmidrule(lr){2-9} 
                          & SCAL                    & 0.32   & 0.21   & 0.17  & 0.33 & 0.29       & 0.20             & 0.23               \\
                          & + ACE                   & 0.29   & 0.19   & 0.27  & 0.32 & 0.21       & 0.29             & 0.27               \\ \cmidrule(lr){2-9} 
                          & FGC                     & 0.24   & 0.21   & 0.25  & 0.20 & 0.25       & 0.23             & 0.31               \\
                          & + ACE                   & 0.32   & 0.24   & 0.32  & 0.26 & 0.32       & 0.20             & 0.23               \\ \cmidrule(lr){2-9} 
                          & UGC                     & 0.19   & 0.26   & 0.31  & 0.17 & 0.38       & 0.17             & 0.38               \\
                          & + ACE                   & 0.25   & 0.31   & 0.17  & 0.23 & 0.23       & 0.23             & 0.25               \\ \cmidrule(lr){2-9} 
                          & SGBGC                   & 0.17   & 0.22   & 0.26  & 0.20 & 0.27       & 0.20             & 0.19               \\
                          & + ACE                   & 0.33   & 0.17   & 0.25  & 0.32 & 0.23       & 0.33             & 0.23               \\ \midrule
\multirow{9}{*}{GloGNN}   & -                       & 0.32   & 0.23   & 0.27  & 0.17 & 0.23       & 0.33             & 0.26               \\ \cmidrule(lr){2-9} 
                          & SCAL                    & 0.33   & 0.33   & 0.19  & 0.23 & 0.32       & 0.29             & 0.33               \\
                          & + ACE                   & 0.24   & 0.26   & 0.32  & 0.20 & 0.26       & 0.27             & 0.38               \\ \cmidrule(lr){2-9} 
                          & FGC                     & 0.19   & 0.22   & 0.27  & 0.32 & 0.31       & 0.33             & 0.20               \\
                          & + ACE                   & 0.27   & 0.32   & 0.33  & 0.25 & 0.27       & 0.31             & 0.26               \\ \cmidrule(lr){2-9} 
                          & UGC                     & 0.19   & 0.26   & 0.17  & 0.26 & 0.20       & 0.29             & 0.29               \\
                          & + ACE                   & 0.27   & 0.17   & 0.38  & 0.19 & 0.27       & 0.31             & 0.33               \\ \cmidrule(lr){2-9} 
                          & SGBGC                   & 0.22   & 0.22   & 0.17  & 0.24 & 0.20       & 0.19             & 0.26               \\
                          & + ACE                   & 0.24   & 0.29   & 0.29  & 0.19 & 0.26       & 0.29             & 0.27               \\ \bottomrule
\end{tabular}}
\end{table*}
\begin{table*}[!t]
\caption{Results of integrating ACE into GPRGNN, SGFormer, and Polynormer. Improvements on heterophilic datasets appear in \textbf{bold}.}
\label{table-mainexp-2}
\centering
\setlength{\tabcolsep}{4pt}
\resizebox{\textwidth}{!}{
\begin{tabular}{ccccccc|cc}
\toprule
\multirow{2}{*}{Backbone}   & \multirow{2}{*}{Method} & \multicolumn{5}{c|}{Heterophilic Graphs}                                                                                        & \multicolumn{2}{c}{Homophilic Graphs} \\ \cmidrule(lr){3-9} 
                            &                         & Genius                  & Gamers                  & Pokec                   & Snap                    & arXiv-year              & ogbn-arXiv        & ogbn-products     \\ \midrule
\multirow{9}{*}{GPRGNN}     & -                       & 90.05                   & 61.89                   & 78.83                   & 40.19                   & 45.07                   & 71.78             & 78.29             \\ \cmidrule(lr){2-9} 
                            & SCAL                    & 66.19                   & 43.48                   & 60.92                   & 28.22                   & 26.53                   & 63.28             & 65.92             \\
                            & + ACE                   & 78.57 (\textbf{+12.38}) & 51.74 (\textbf{+8.26})  & 70.25 (\textbf{+9.33})  & 32.42 (\textbf{+4.20})  & 35.59 (\textbf{+9.06})  & 64.76 (+1.48)     & 68.25 (+2.33)     \\ \cmidrule(lr){2-9} 
                            & FGC                     & 69.28                   & 45.92                   & 57.14                   & 26.44                   & 28.60                   & 62.19             & 63.28             \\
                            & + ACE                   & 79.02 (\textbf{+9.74})  & 52.38 (\textbf{+6.46})  & 64.87 (\textbf{+7.73})  & 31.22 (\textbf{+4.78})  & 36.73 (\textbf{+8.13})  & 64.29 (+2.10)     & 65.83 (+2.55)     \\ \cmidrule(lr){2-9} 
                            & UGC                     & 68.06                   & 41.20                   & 61.33                   & 25.38                   & 29.12                   & 61.44             & 65.26             \\
                            & + ACE                   & 78.17 (\textbf{+10.11}) & 51.06 (\textbf{+9.86})  & 69.37 (\textbf{+8.04})  & 31.46 (\textbf{+6.08})  & 34.59 (\textbf{+5.47})  & 63.19 (+1.75)     & 67.84 (+2.58)     \\ \cmidrule(lr){2-9} 
                            & SGBGC                   & 69.93                   & 44.26                   & 62.41                   & 26.93                   & 31.26                   & 63.14             & 63.02             \\
                            & + ACE                   & 77.39 (\textbf{+7.46})  & 50.61 (\textbf{+6.35})  & 67.38 (\textbf{+4.97})  & 30.24 (\textbf{+3.31})  & 38.42 (\textbf{+7.16})  & 65.02 (+1.88)     & 68.47 (+5.45)     \\ \midrule
\multirow{9}{*}{SGFormer}   & -                       & 85.01                   & 65.93                   & 80.84                   & 63.84                   & 51.23                   & 72.63             & 81.54             \\ \cmidrule(lr){2-9} 
                            & SCAL                    & 63.91                   & 51.74                   & 63.47                   & 42.73                   & 30.81                   & 61.38             & 71.74             \\
                            & + ACE                   & 75.81 (\textbf{+11.90}) & 57.38 (\textbf{+5.64})  & 71.88 (\textbf{+8.41})  & 53.62 (\textbf{+10.89}) & 43.76 (\textbf{+12.95}) & 63.02 (+1.64)     & 73.84 (+2.10)     \\ \cmidrule(lr){2-9} 
                            & FGC                     & 67.26                   & 52.09                   & 61.37                   & 40.29                   & 32.48                   & 63.81             & 73.62             \\
                            & + ACE                   & 76.31 (\textbf{+9.05})  & 56.38 (\textbf{+4.29})  & 70.33 (\textbf{+8.96})  & 50.13 (\textbf{+9.84})  & 43.02 (\textbf{+10.54}) & 65.11 (+1.30)     & 74.84 (+1.22)     \\ \cmidrule(lr){2-9} 
                            & UGC                     & 65.38                   & 49.07                   & 64.35                   & 44.16                   & 28.60                   & 60.77             & 70.04             \\
                            & + ACE                   & 75.48 (\textbf{+10.10}) & 58.34 (\textbf{+9.27})  & 71.48 (\textbf{+7.13})  & 49.78 (\textbf{+5.62})  & 38.26 (\textbf{+9.66})  & 63.29 (+2.52)     & 73.16 (+3.12)     \\ \cmidrule(lr){2-9} 
                            & SGBGC                   & 68.73                   & 52.66                   & 60.73                   & 46.38                   & 34.01                   & 61.62             & 68.30             \\
                            & + ACE                   & 76.12 (\textbf{+7.39})  & 57.02 (\textbf{+4.36})  & 69.66 (\textbf{+8.93})  & 52.09 (\textbf{+5.71})  & 39.74 (\textbf{+5.73})  & 64.21 (+2.59)     & 72.73 (+4.43)     \\ \midrule
\multirow{9}{*}{Polynormer} & -                       & 85.64                   & 64.72                   & 86.06                   & 61.92                   & 53.48                   & 73.40             & 83.82             \\ \cmidrule(lr){2-9} 
                            & SCAL                    & 64.93                   & 48.42                   & 64.38                   & 40.63                   & 32.74                   & 63.93             & 71.38             \\
                            & + ACE                   & 75.73 (\textbf{+10.80}) & 55.62 (\textbf{+7.20})  & 73.16 (\textbf{+8.78})  & 47.63 (\textbf{+7.00})  & 40.61 (\textbf{+7.87})  & 65.28 (+1.35)     & 73.47 (+2.09)     \\ \cmidrule(lr){2-9} 
                            & FGC                     & 62.03                   & 44.22                   & 66.63                   & 38.66                   & 35.26                   & 64.22             & 72.48             \\
                            & + ACE                   & 74.79 (\textbf{+12.76}) & 57.48 (\textbf{+13.26}) & 77.81 (\textbf{+11.18}) & 50.13 (\textbf{+11.47}) & 46.72 (\textbf{+11.46}) & 65.79 (+1.57)     & 74.61 (+2.13)     \\ \cmidrule(lr){2-9} 
                            & UGC                     & 66.02                   & 45.49                   & 68.49                   & 43.84                   & 30.34                   & 61.28             & 71.79             \\
                            & + ACE                   & 74.20 (\textbf{+8.18})  & 57.02 (\textbf{+11.53}) & 76.24 (\textbf{+7.75})  & 51.26 (\textbf{+7.42})  & 43.28 (\textbf{+12.94}) & 64.10 (+2.82)     & 73.45 (+1.66)     \\ \cmidrule(lr){2-9} 
                            & SGBGC                   & 67.98                   & 41.37                   & 62.13                   & 45.01                   & 32.63                   & 63.33             & 70.16             \\
                            & + ACE                   & 77.83 (\textbf{+9.85})  & 53.81 (\textbf{+12.44}) & 74.86 (\textbf{+12.73}) & 49.38 (\textbf{+4.37})  & 38.97 (\textbf{+6.34})  & 65.03 (+1.70)     & 72.88 (+2.72)     \\ \bottomrule
\end{tabular}}
\end{table*}
\begin{table*}[!t]
\caption{Additional results for \textbf{task-optimized} coarsening pipelines—GCond~\citep{Gcond} and ConvMatch~\citep{GC-GCNonly-convmatch}—evaluated using GCN (homophily-oriented) and GPRGNN (heterophily-oriented) backbones across diverse benchmarks.}
\label{table-convmatch-gcond}
\centering
\setlength{\tabcolsep}{4pt}
\resizebox{\textwidth}{!}{
\begin{tabular}{ccccccc|cc}
\toprule
\multirow{2}{*}{Backbone} & \multirow{2}{*}{Method} & \multicolumn{5}{c|}{Heterophilic Graphs}                                                                                     & \multicolumn{2}{c}{Homophilic Graphs} \\ \cmidrule(lr){3-9} 
                          &                         & Genius                  & Gamers                 & Pokec                   & Snap                   & arXiv-year             & ogbn-arXiv        & ogbn-products     \\ \midrule
\multirow{5}{*}{GCN}      & -                       & 87.42                   & 62.18                  & 75.45                   & 45.65                  & 46.02                  & 71.74             & 75.64             \\ \cmidrule(lr){2-9} 
                          & GCond                   & 70.22                   & 51.27                  & 53.77                   & 29.78                  & 30.22                  & 66.38             & 71.19             \\
                          & + ACE                   & 78.33 (\textbf{+8.11})  & 56.26 (\textbf{+4.99}) & 58.49 (\textbf{+4.72})  & 36.37 (\textbf{+6.59}) & 37.83 (\textbf{+7.61}) & 67.15 (+0.77)     & 72.01 (+0.82)     \\ \cmidrule(lr){2-9} 
                          & ConvMatch              & 68.13                   & 43.74                  & 50.66                   & 24.37                  & 25.94                  & 66.12             & 70.93             \\
                          & + ACE                   & 76.44 (\textbf{+8.31})  & 52.07 (\textbf{+8.33}) & 55.39 (\textbf{+4.73})  & 32.29 (\textbf{+7.92}) & 30.85 (\textbf{+4.91}) & 67.95 (+1.83)     & 72.83 (+1.90)     \\ \midrule
\multirow{5}{*}{GPRGNN}   & -                       & 90.05                   & 61.89                  & 78.83                   & 40.19                  & 45.07                  & 71.78             & 78.29             \\ \cmidrule(lr){2-9} 
                          & GCond                   & 71.39                   & 49.47                  & 58.65                   & 30.47                  & 31.38                  & 65.93             & 70.27             \\
                          & + ACE                   & 79.63 (\textbf{+8.24})  & 55.65 (\textbf{+6.18}) & 71.33 (\textbf{+12.68}) & 34.28 (\textbf{+3.81}) & 37.19 (\textbf{+5.81}) & 66.49 (+0.56)     & 72.01 (+1.74)     \\ \cmidrule(lr){2-9} 
                          & ConvMatch              & 66.31                   & 44.67                  & 51.40                   & 22.33                  & 24.60                  & 66.19             & 67.28             \\
                          & + ACE                   & 77.24 (\textbf{+10.93}) & 50.42 (\textbf{+5.75}) & 66.67 (\textbf{+15.27}) & 30.05 (\textbf{+7.72}) & 31.43 (\textbf{+6.83}) & 67.43 (+1.24)     & 68.44 (+1.19)     \\ \bottomrule
\end{tabular}}
\end{table*}
\begin{table*}[!t]
\caption{Additional results under an extreme 1\% coarsening ratio.}
\label{table-0.01coarse}
\centering
\setlength{\tabcolsep}{4pt}
\resizebox{\textwidth}{!}{
\begin{tabular}{ccccccc|cc}
\toprule
\multirow{2}{*}{Backbone} & \multirow{2}{*}{Method} & \multicolumn{5}{c|}{Heterophilic Graphs}                                                                                      & \multicolumn{2}{c}{Homophilic Graphs} \\ \cmidrule(lr){3-9} 
                          &                         & Genius                  & Gamers                  & Pokec                   & Snap                   & arXiv-year             & ogbn-arXiv        & ogbn-products     \\ \midrule
\multirow{7}{*}{GCN}      & -                       & 87.42                   & 62.18                   & 75.45                   & 45.65                  & 46.02                  & 71.74             & 75.64             \\ \cmidrule(lr){2-9} 
                          & SCAL                    & 55.42                   & 39.21                   & 46.26                   & 20.41                  & 22.43                  & 60.39             & 64.69             \\
                          & + ACE                   & 73.76 (\textbf{+18.34}) & 52.43 (\textbf{+13.22}) & 53.80 (\textbf{+7.54})  & 28.73 (\textbf{+8.32}) & 29.38 (\textbf{+6.95}) & 64.47 (+4.08)     & 68.73 (+4.04)     \\ \cmidrule(lr){2-9} 
                          & FGC                     & 63.84                   & 36.08                   & 45.37                   & 21.23                  & 21.40                  & 62.86             & 66.14             \\
                          & + ACE                   & 72.74 (\textbf{+8.90})  & 50.18 (\textbf{+14.10}) & 54.55 (\textbf{+9.18})  & 30.25 (\textbf{+9.02}) & 29.01 (\textbf{+7.61}) & 63.71 (+0.85)     & 70.22 (+4.08)     \\ \cmidrule(lr){2-9} 
                          & SGBGC                   & 64.17                   & 40.25                   & 48.66                   & 23.24                  & 22.42                  & 62.56             & 64.81             \\
                          & + ACE                   & 71.02 (\textbf{+6.85})  & 52.39 (\textbf{+12.14}) & 53.29 (\textbf{+4.63})  & 28.63 (\textbf{+5.39}) & 30.22 (\textbf{+7.80}) & 64.66 (+2.10)     & 68.37 (+3.56)     \\ \midrule
\multirow{7}{*}{GPRGNN}   & -                       & 90.05                   & 61.89                   & 78.83                   & 40.19                  & 45.07                  & 71.78             & 78.29             \\ \cmidrule(lr){2-9} 
                          & SCAL                    & 59.39                   & 37.82                   & 50.68                   & 26.28                  & 22.15                  & 61.47             & 63.06             \\
                          & + ACE                   & 73.54 (\textbf{+14.15}) & 47.46 (\textbf{+9.64})  & 61.62 (\textbf{+10.94}) & 30.33 (\textbf{+4.05}) & 30.42 (\textbf{+8.27}) & 63.22 (+1.75)     & 65.81 (+2.75)     \\ \cmidrule(lr){2-9} 
                          & FGC                     & 57.47                   & 36.81                   & 51.42                   & 23.37                  & 26.55                  & 61.05             & 60.39             \\
                          & + ACE                   & 71.23 (\textbf{+13.76}) & 45.23 (\textbf{+8.42})  & 59.83 (\textbf{+8.41})  & 27.83 (\textbf{+4.46}) & 31.17 (\textbf{+4.62}) & 63.14 (+2.09)     & 63.77 (+3.38)     \\ \cmidrule(lr){2-9} 
                          & SGBGC                   & 61.63                   & 38.41                   & 54.48                   & 23.41                  & 25.07                  & 61.39             & 60.97             \\
                          & + ACE                   & 70.78 (\textbf{+9.15})  & 47.81 (\textbf{+9.40})  & 64.13 (\textbf{+9.65})  & 28.35 (\textbf{+4.94}) & 30.93 (\textbf{+5.86}) & 63.77 (+2.38)     & 65.48 (+4.51)     \\ \bottomrule
\end{tabular}}
\end{table*}
\begin{table}[!th]
\centering
\caption{Comparison with state-of-the-art scalable GNN training methods based on graph data reduction. Results on a \textbf{GPRGNN} backbone with a 10\% reduction ratio are reported in terms of test accuracy and total training time.}
\resizebox{\linewidth}{!}{
\begin{tabular}{cccc|cc|cc|cc}
\toprule
\multirow{2}{*}{Category}      & \multirow{2}{*}{Method} & \multicolumn{2}{c|}{Pokec} & \multicolumn{2}{c|}{Snap} & \multicolumn{2}{c|}{Gamers} & \multicolumn{2}{c}{Ogbn-product} \\ \cmidrule(lr){3-10} 
                               &                         & ACC       & Time (min)     & ACC       & Time (min)    & ACC        & Time (min)     & ACC          & Time (min)        \\ \midrule
\multirow{2}{*}{Coarsening}    & FGC                     & 57.14     & 22.48          & 26.44     & 30.84         & 45.92      & 10.28          & 63.28        & 32.17             \\
                               & UGC                     & 61.33     & 21.55          & 25.38     & 28.26         & 41.20      & 10.56          & 65.26        & 30.46             \\ \midrule
Data-Adaptive                  & GECC                    & 64.19     & 32.16          & 30.81     & 36.43         & 49.62      & 16.81          & 66.77        & 43.49             \\ \midrule
Subgraph-Sampling                       & AGS                     & 62.77     & 19.92          & 28.34     & 28.38         & 48.13      & 9.44           & 66.11        & 29.54             \\ \midrule
\multirow{2}{*}{\textbf{Ours}} & FGC+ACE                 & \textbf{64.87}     & 22.96          & \textbf{31.22}     & 31.37         & \textbf{52.38}      & 10.86          & 65.83        & 33.09             \\
                               & UGC+ACE                 & \textbf{69.37}     & 21.88          & \textbf{31.46}     & 29.84         & \textbf{51.06}      & 11.16          & \textbf{67.84}        & 31.22             \\ \bottomrule
\end{tabular}}
\label{table-againstSOTA-GPRGNN}
\end{table}
\begin{table}[!th]
\centering
\caption{Detailed ablation study of individual components in ACE. Results are reported on a \textbf{GCN} backbone with a 10\% reduction ratio, including test accuracy for the full method (ACE base) and relative performance drop for each variant.}
\begin{tabular}{ccccccc|c}
\toprule
Method               & Variant                  & Genius & Gamers & Pokec & Snap  & Ogbn-Product & Average Drop   \\ \midrule
\multirow{5}{*}{FGC} & ACE (base)               & 77.32  & 52.52  & 61.67 & 32.73 & 72.47        & -              \\ \cmidrule(lr){2-8} 
                     & ACE (fixed weighting)    & -0.35  & -0.59  & -0.41 & -0.62 & -0.06        & -0.41          \\ \cmidrule(lr){2-8} 
                     & ACE (no anisotropic reg) & -0.70  & -1.83  & -1.90 & -1.30 & -0.18        & -1.18          \\ \cmidrule(lr){2-8} 
                     & ACE (fixed projector)    & -1.34  & -2.60  & -3.51 & -1.66 & -0.36        & \textbf{-1.90} \\ \cmidrule(lr){2-8} 
                     & naive auxiliary loss     & -1.65  & -3.26  & -4.45 & -2.07 & -0.41        & \textbf{-2.37} \\ \midrule
\multirow{5}{*}{UGC} & ACE (base)               & 75.28  & 53.83  & 59.37 & 30.82 & 70.19        & -              \\ \cmidrule(lr){2-8} 
                     & ACE (fixed weighting)    & -0.26  & -0.71  & -0.63 & -0.50 & -0.11        & -0.44          \\ \cmidrule(lr){2-8} 
                     & ACE (no anisotropic reg) & -0.81  & -2.06  & -1.75 & -1.22 & -0.19        & -1.20          \\ \cmidrule(lr){2-8} 
                     & ACE (fixed projector)    & -1.72  & -3.61  & -3.83 & -1.93 & -0.35        & \textbf{-2.29} \\ \cmidrule(lr){2-8} 
                     & naive auxiliary loss     & -2.44  & -4.23  & -4.16 & -2.65 & -0.56        & \textbf{-2.70} \\ \bottomrule
\end{tabular}
\label{table-ablation-full-GCN}
\end{table}
\begin{table}[!th]
\centering
\caption{Detailed ablation study of individual components in ACE. Results are reported on a \textbf{GPRGNN} backbone with a 10\% reduction ratio, including test accuracy for the full method (ACE base) and relative performance drop for each variant.}
\begin{tabular}{ccccccc|c}
\toprule
Method               & Variant                  & Genius & Gamers & Pokec & Snap  & Ogbn-Product & Average Drop   \\ \midrule
\multirow{5}{*}{FGC} & ACE (base)               & 79.02  & 52.38  & 64.87 & 31.22 & 65.83        & -              \\ \cmidrule(lr){2-8} 
                     & ACE (fixed weighting)    & -0.27  & -0.41  & -0.46 & -0.44 & -0.10        & -0.34          \\ \cmidrule(lr){2-8} 
                     & ACE (no anisotropic reg) & -0.79  & -1.23  & -1.64 & -1.14 & -0.15        & -0.99          \\ \cmidrule(lr){2-8} 
                     & ACE (fixed projector)    & -1.65  & -2.84  & -3.47 & -1.92 & -0.28        & \textbf{-2.03} \\ \cmidrule(lr){2-8} 
                     & naive auxiliary loss     & -2.12  & -3.33  & -3.93 & -2.47 & -0.43        & \textbf{-2.46} \\ \midrule
\multirow{5}{*}{UGC} & ACE (base)               & 78.17  & 51.06  & 69.37 & 31.46 & 67.84        & -              \\ \cmidrule(lr){2-8} 
                     & ACE (fixed weighting)    & -0.38  & -0.56  & -0.42 & -0.53 & -0.08        & -0.39          \\ \cmidrule(lr){2-8} 
                     & ACE (no anisotropic reg) & -0.75  & -1.78  & -1.93 & -1.42 & -0.16        & -1.21          \\ \cmidrule(lr){2-8} 
                     & ACE (fixed projector)    & -2.14  & -3.59  & -3.45 & -2.46 & -0.30        & \textbf{-2.39} \\ \cmidrule(lr){2-8} 
                     & naive auxiliary loss     & -2.81  & -4.02  & -4.09 & -3.16 & -0.45        & \textbf{-2.91} \\ \bottomrule
\end{tabular}
\label{table-ablation-full-GPRGNN}
\end{table}
\begin{figure*}[!t]
  \centering
    \subfloat[SCAL on Pokec.]{
    \label{fig:beta-ablation-full-scal-pokec}
    \includegraphics[width=0.235\linewidth]{figures/beta_scal_pokec.pdf}
    }
    \subfloat[SCAL on Snap.]{
    \label{fig:beta-ablation-full-scal-snap}
    \includegraphics[width=0.235\linewidth]{figures/beta_scal_snap.pdf}
    }
    \subfloat[FGC on Pokec.]{
    \label{fig:beta-ablation-full-fgc-pokec}
    \includegraphics[width=0.235\linewidth]{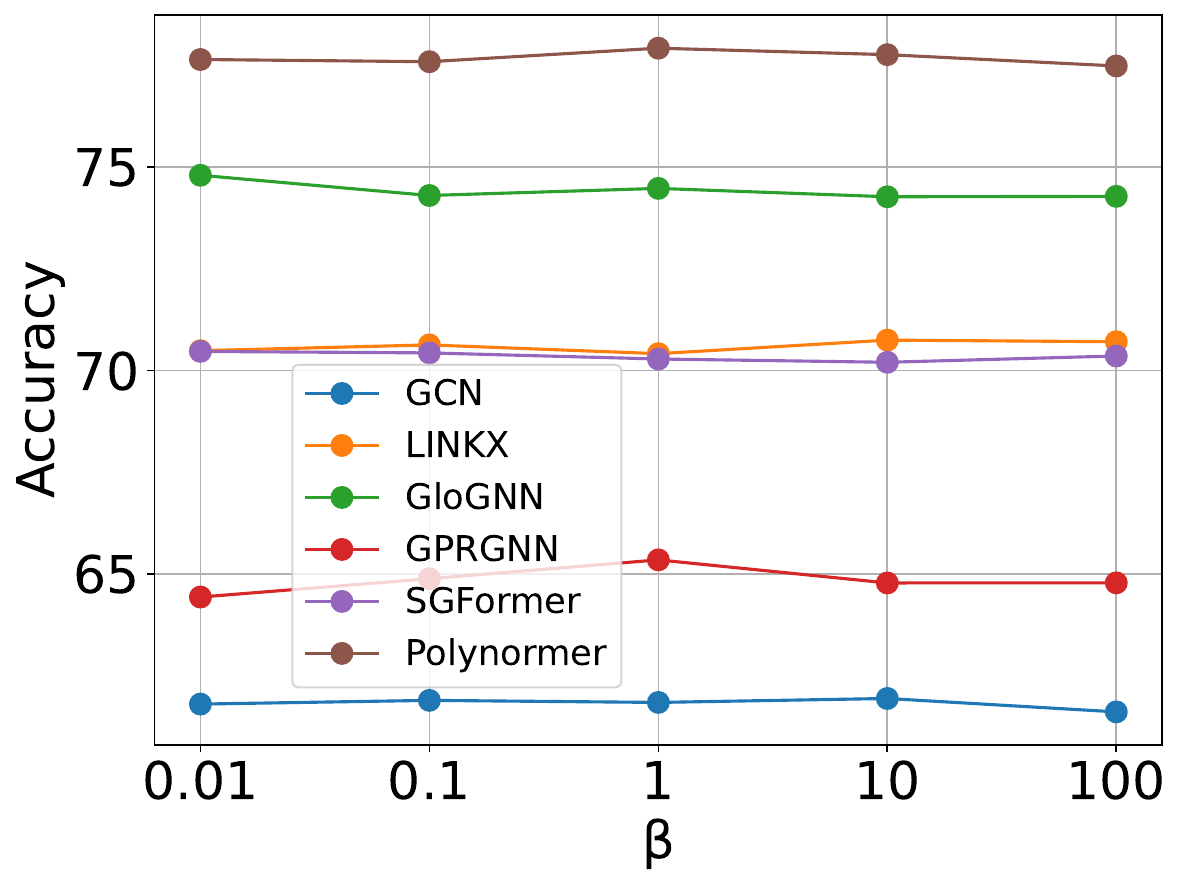}
    }
    \subfloat[FGC on Snap.]{
    \label{fig:beta-ablation-full-fgc-snap}
    \includegraphics[width=0.235\linewidth]{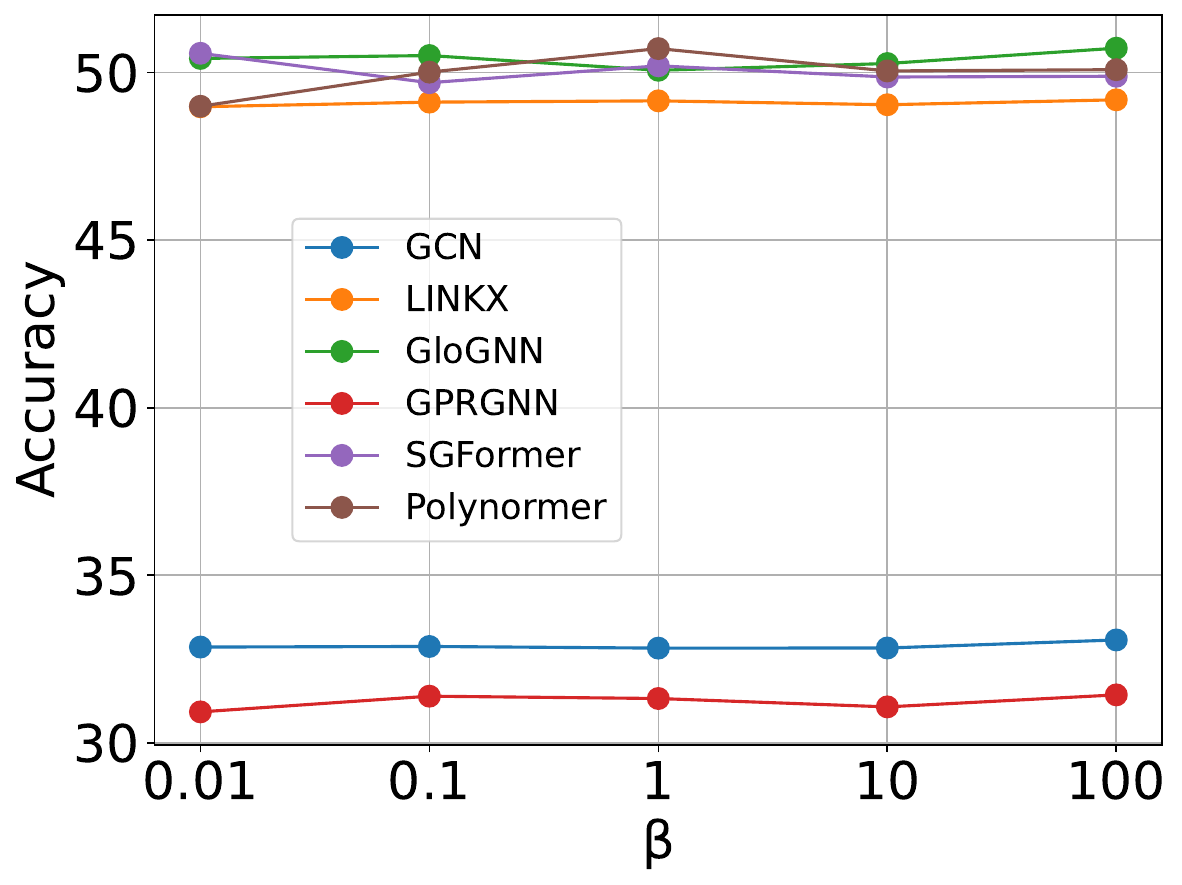}
    } \\
    \subfloat[UGC on Pokec.]{
    \label{fig:beta-ablation-full-ugc-pokec}
    \includegraphics[width=0.235\linewidth]{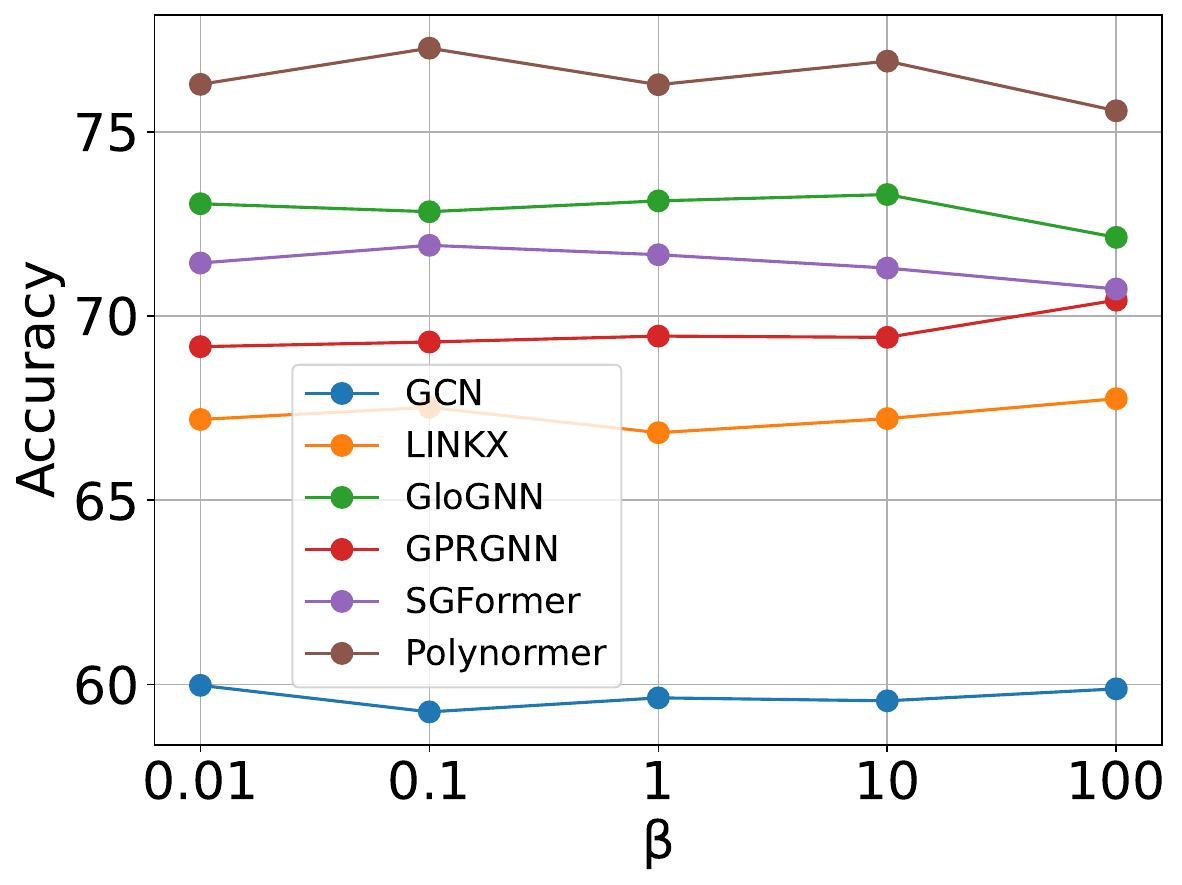}
    }
    \subfloat[UGC on Snap.]{
    \label{fig:beta-ablation-full-ugc-snap}
    \includegraphics[width=0.235\linewidth]{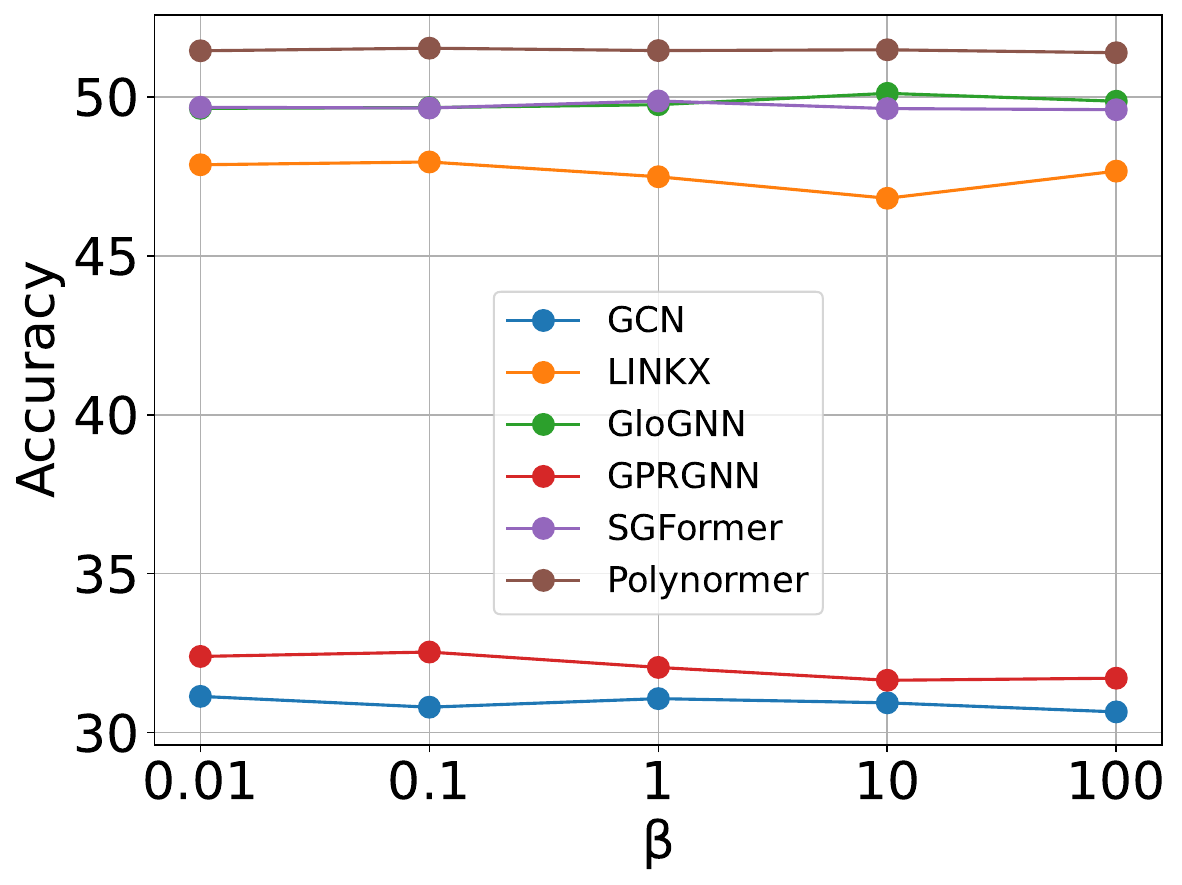}
    }
    \subfloat[SGBGC on Pokec.]{
    \label{fig:beta-ablation-full-sgbgc-pokec}
    \includegraphics[width=0.235\linewidth]{figures/beta_sgbgc_pokec.pdf}
    }
    \subfloat[SGBGC on Snap.]{
    \label{fig:beta-ablation-full-sgbgc-snap}
    \includegraphics[width=0.235\linewidth]{figures/beta_sgbgc_snap.pdf}
    }
  \caption{Effect of $\beta$ across coarsening pipelines, GNN backbones, and datasets.}
  \label{fig:beta-ablation-full}
\end{figure*}
\begin{figure*}[!t]
  \centering
    \subfloat[Preprocessing time: SCAL vs. SCAL+ACE.]{
    \label{fig:preprocessing-full-scal}
    \includegraphics[width=0.495\linewidth]{figures/scal-preprocess.pdf}
    }
    \subfloat[Preprocessing time: FGC vs. FGC+ACE.]{
    \label{fig:preprocessing-full-FGC}
    \includegraphics[width=0.495\linewidth]{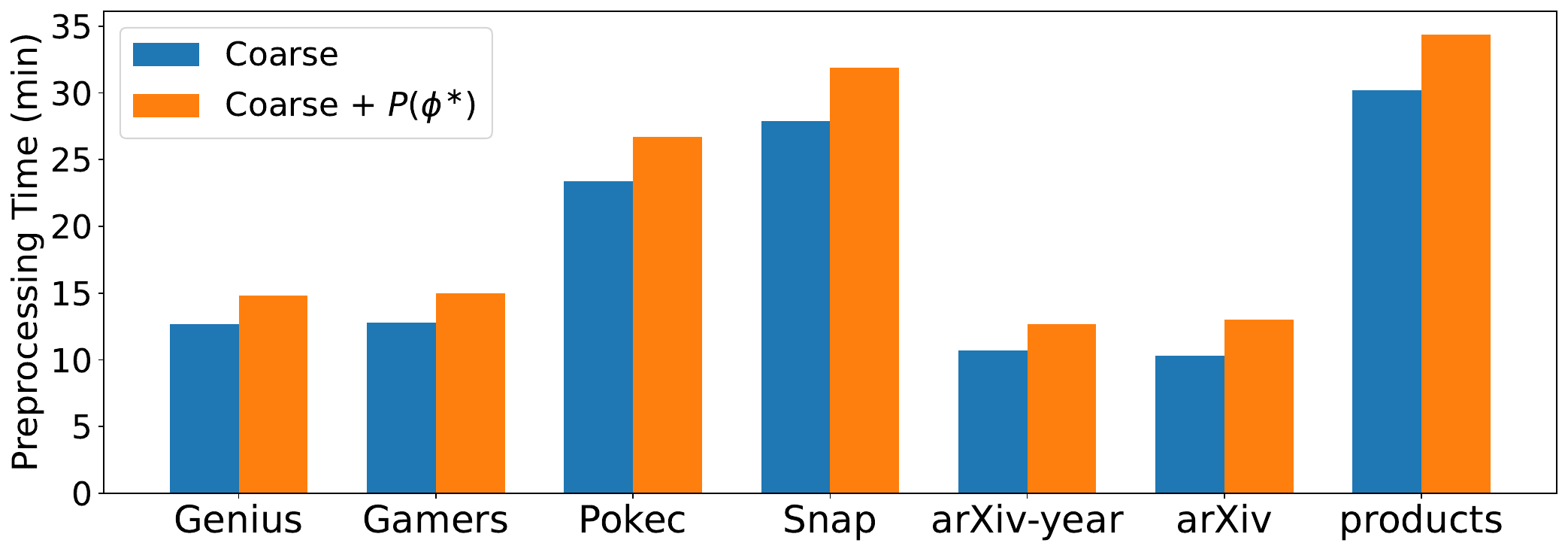}
    }\\
    \subfloat[Preprocessing time: UGC vs. UGC+ACE.]{
    \label{fig:preprocessing-full-UGC}
    \includegraphics[width=0.495\linewidth]{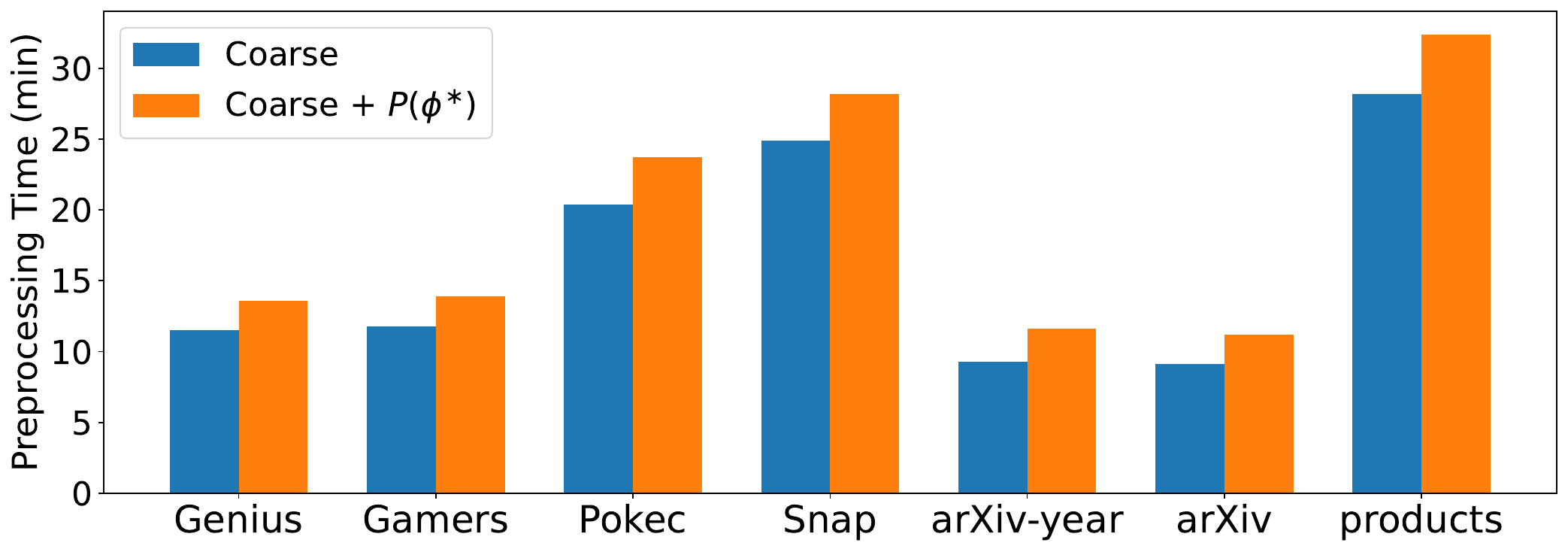}
    }
    \subfloat[Preprocessing time: SGBGC vs. SGBGC+ACE.]{
    \label{fig:preprocessing-full-sgbgc}
    \includegraphics[width=0.495\linewidth]{figures/sgbgc-preprocess.pdf}
    }
  \caption{Preprocessing time of each coarsening training pipelines and those with our ACE across diverse benchmarks.}
  \label{fig:preprocessing-full}
\end{figure*}
\begin{figure*}[!t]
  \centering
    \subfloat[SCAL training time per epoch.]{
    \label{fig:training-time-full-scal}
    \includegraphics[width=0.495\linewidth]{figures/scal-pokec-training.pdf}
    }
    \subfloat[FGC training time per epoch.]{
    \label{fig:training-time-full-FGC}
    \includegraphics[width=0.495\linewidth]{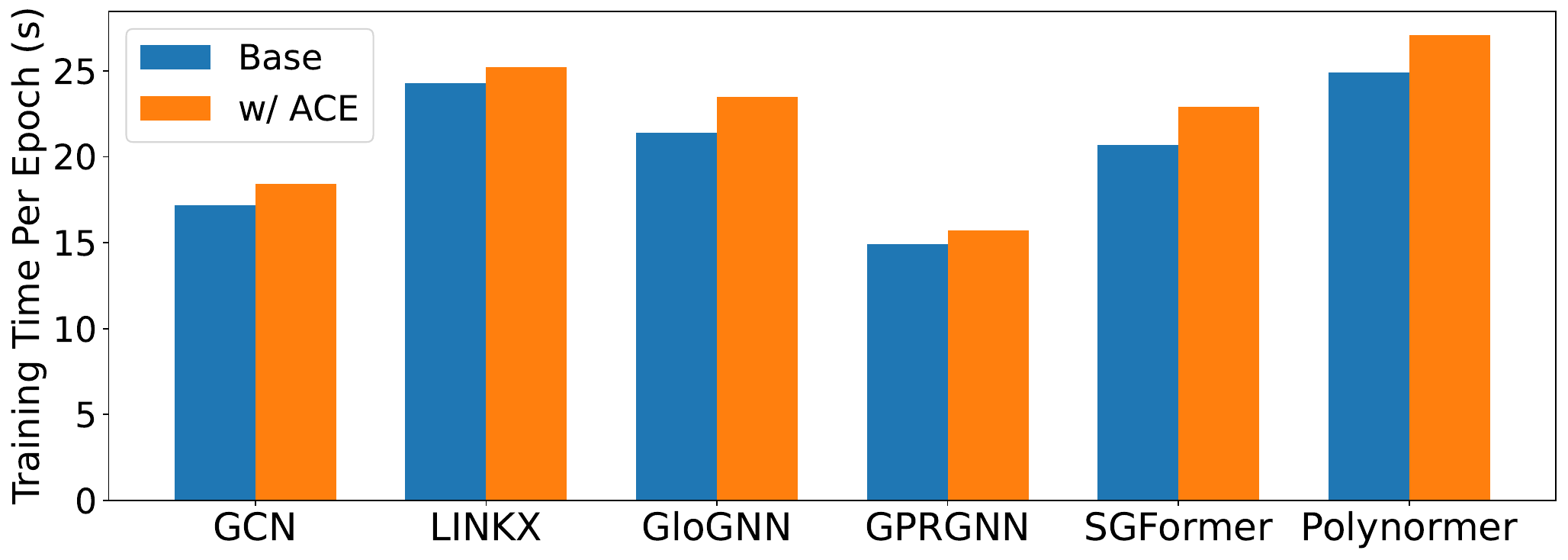}
    }\\
    \subfloat[SGBGC training time per epoch.]{
    \label{fig:training-time-full-sgbgc}
    \includegraphics[width=0.495\linewidth]{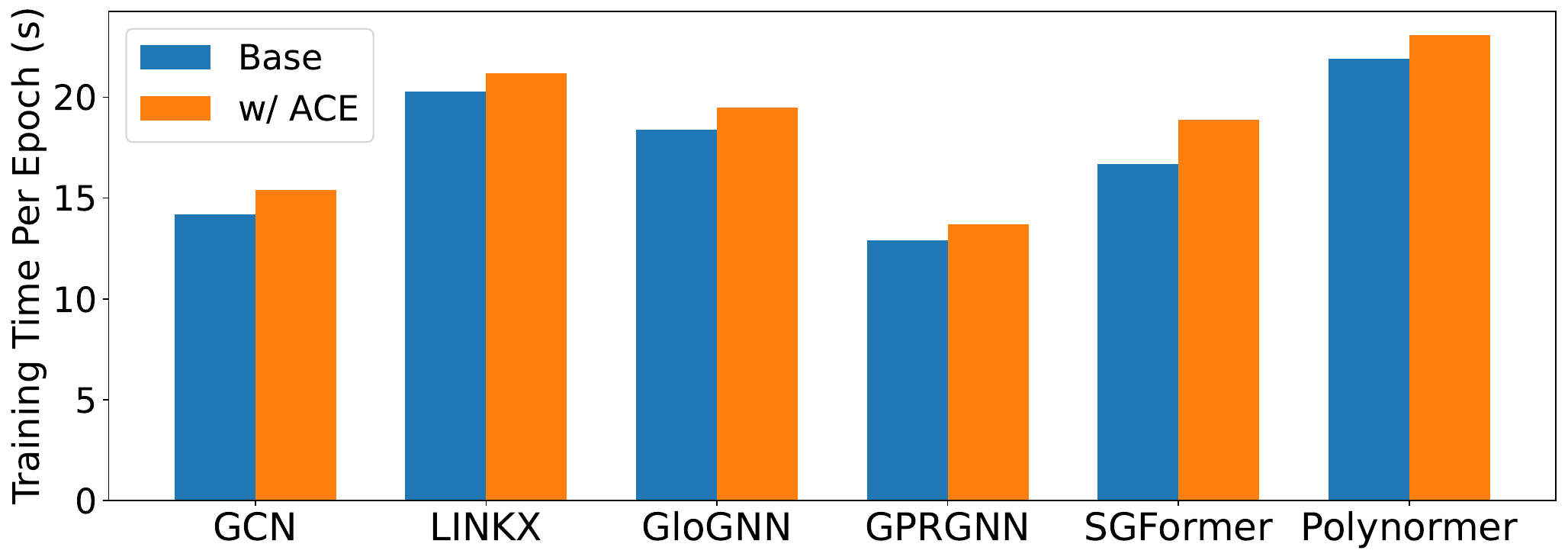}
    }
  \caption{Per-epoch training time of SCAL, FGC, SGBGC, and their ACE-improved versions across diverse backbone GNNs.}
  \label{fig:training-time-full}
\end{figure*}
\begin{figure*}[!t]
  \centering
    \subfloat[SCAL training memory usage.]{
    \label{fig:gpu-memory-full-scal}
    \includegraphics[width=0.495\linewidth]{figures/scal-pokec-memory.pdf}
    }
    \subfloat[FGC training memory usage.]{
    \label{fig:gpu-memory-full-FGC}
    \includegraphics[width=0.495\linewidth]{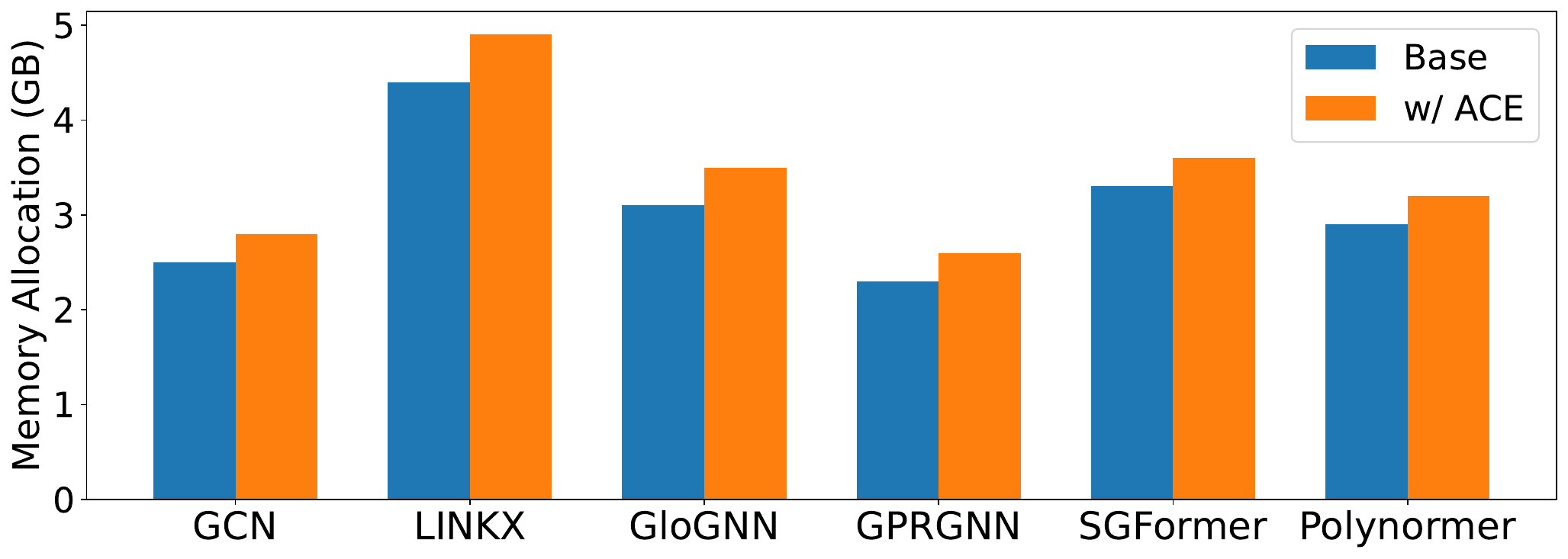}
    }\\
    \subfloat[SGBGC training memory usage.]{
    \label{fig:gpu-memory-full-sgbgc}
    \includegraphics[width=0.495\linewidth]{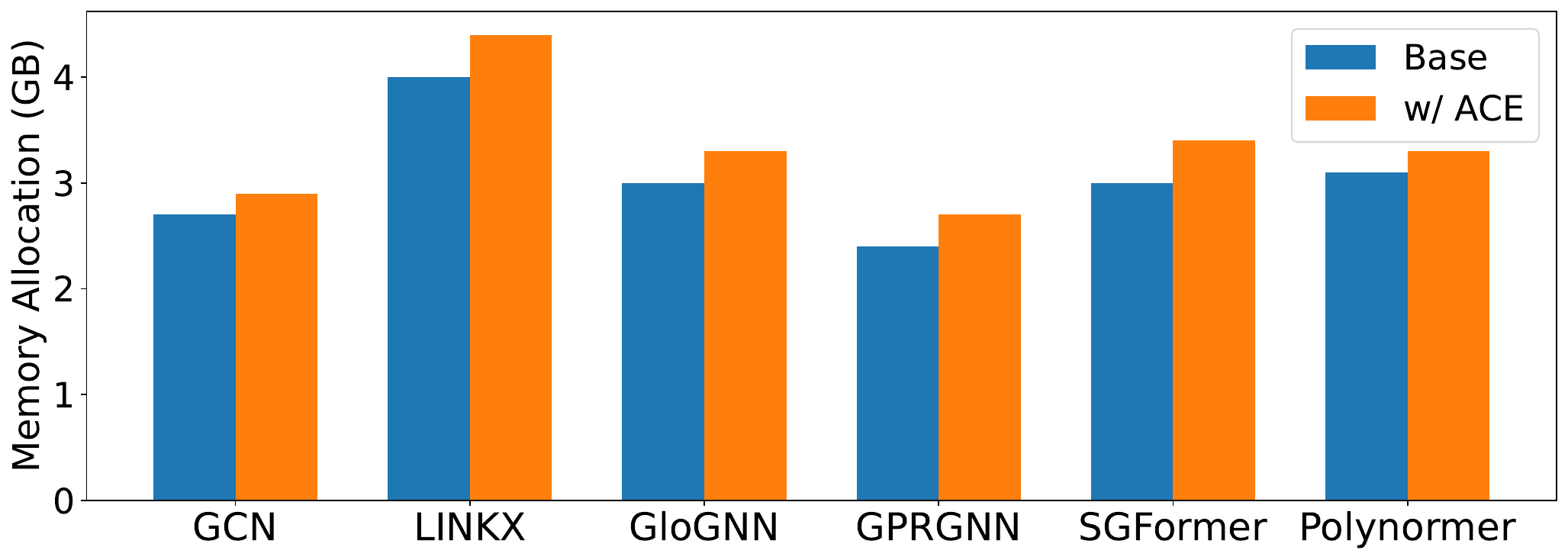}
    }
  \caption{Training GPU memory allocation of SCAL, FGC, SGBGC, and their ACE-improved versions across diverse backbone GNNs.}
  \label{fig:gpu-memory-full}
\end{figure*}
\begin{figure}[!t]
  \centering
    \subfloat[GCN as backbone.]{
    \label{fig-1}
    \includegraphics[width=0.475\linewidth]{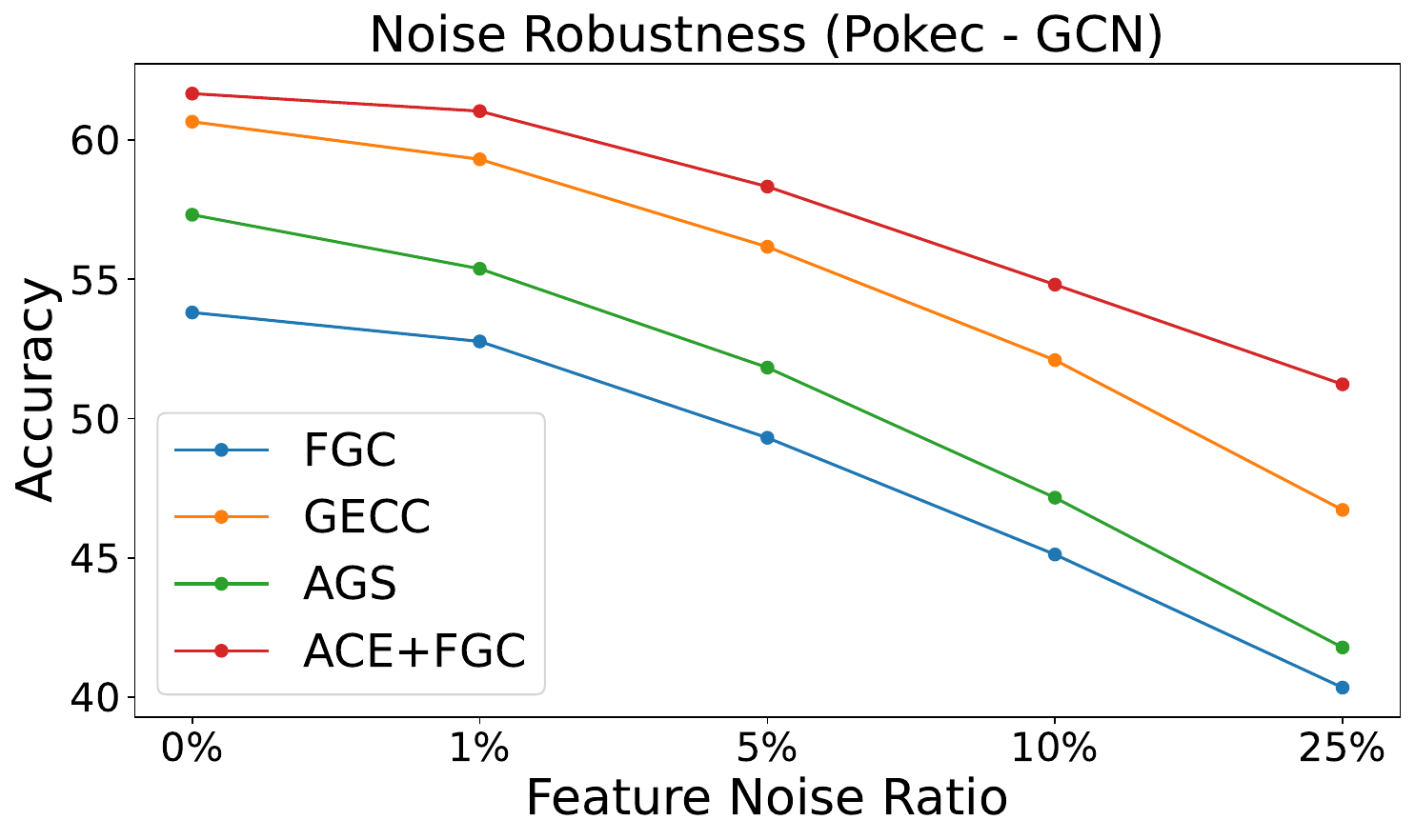}
    }
    \subfloat[GPRGNN as backbone.]{
    \label{fig-2}
    \includegraphics[width=0.475\linewidth]{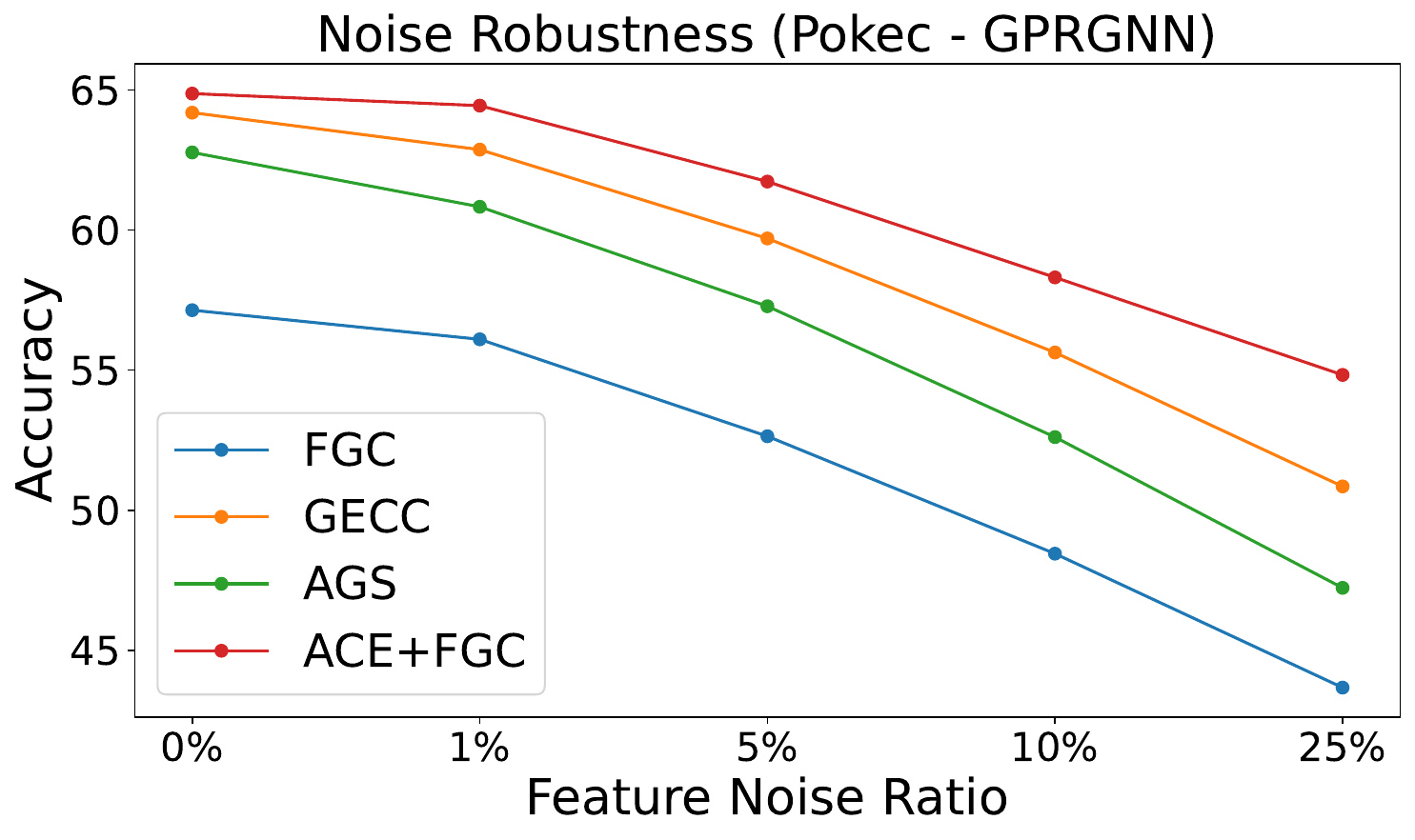}
    }
  \caption{Noise robustness evaluation on the Pokec dataset. Experiments are conducted using GCN and GPRGNN backbones with a 10\% graph reduction ratio. We report performance under feature perturbations, where Gaussian noise is added to randomly sampled node features in the original graph, with sampling ratios of \{0\%, 1\%, 5\%, 10\%, 25\%\}.}
  \label{fig:noiserobust}
\end{figure}

\end{document}